\newcommand{\ignore}[1]{}
 \def\U{\mathcal{U}}
\def\X{\mathcal{X}}
 \def\dN{\mathbb{N}} 
\def\dR{\mathbb{R}}
\def\epsilon{\varepsilon}
\newif\ifshowcomments
  \newcommand{\kiril}[1]{\textcolor{red}{(\textbf{Kiril:} #1)}}
  \newcommand{\avishav}[1]{\textcolor{orange}{(\textbf{Avishav:} #1)}}
  \newcommand{\omer}[1]{\textcolor{blue}{(\textbf{Omer:} #1)}}
  \newcommand{\AZ}[1]{\textcolor{purple}{\textbf{Andrey:} #1}}
  \newcommand{\OS}[1]{\textcolor{green}{\textbf{Oren:} #1}}
  \newcommand{\todo}[1]{\textcolor{cyan}{(\textbf{TODO:} #1)}}
  \newcommand{\kiril}[1]{\ignorespaces}
  \newcommand{\avishav}[1]{\ignorespaces}
  \newcommand{\omer}[1]{\ignorespaces}
  \newcommand{\AZ}[1]{\ignorespaces}
  \newcommand{\OS}[1]{\ignorespaces}
  \newcommand{\todo}[1]{\ignorespaces}
\def\niceparagraph#1{\vspace{5pt} \noindent \textbf{#1}}
\spnewtheorem{problem}{Problem}{\bfseries}{\itshape}
\newcommand{\algd}{\textsc{GTNS}\xspace}
\newcommand{\algdfull}{Game-Theoretic Nested Search (\algd)\xspace}
\newcommand{\algisne}{\textsc{isNashEquilibrium}\xspace}
\newcommand{\Xnear}{X_{\textup{near}}}
\newcommand{\Xgoal}{\X_{\textup{goal}}}
\newcommand{\xcand}{x_{\textup{cand}}}
\newcommand{\xnew}{x_{\textup{new}}}
\newif\ifincludeappendix
\begin{document}

    \title{Effective Game-Theoretic Motion Planning via Nested Search}
\author{Avishav Engle\thanks{Corresponding author.} \and Andrey Zhitnikov \and Oren Salzman \and \\ Omer Ben-Porat \and  Kiril Solovey}
\institute{Technion--Israel Institute of Technology, Haifa, Israel\\
\email{\{avishav,andreyz\}@campus.technion.ac.il, osalzman@cs.technion.ac.il, \{omerbp,kirilsol\}@technion.ac.il}}
\authorrunning{Engle et al.}

\maketitle

\vspace{-5pt}
\begin{abstract}
To facilitate effective and safe deployment, individual robots must reason about interactions with other agents, which often occur without explicit communication. Recent work has identified game theory, particularly the concept of \emph{Nash Equilibrium} (NE), as a key enabler for behavior-aware motion planning. Yet, existing work falls short of fully unleashing the power of game-theoretic reasoning. Specifically, optimization-based methods require simplified robot dynamics and may get trapped in local minima due to convexification. Other works that rely on the explicit enumeration of all possible trajectories suffer from poor scalability. 
To bridge this gap, we introduce \emph{Game-Theoretic Nested Search} (GTNS), a scalable, and provably-correct 
approach for computing NEs in 
general dynamical systems. 
GTNS efficiently searches the action space of all agents involved, while discarding trajectories that violate the NE constraint 
through an inner search over a lower-dimensional space.
Our algorithm enables explicit selection among equilibria by utilizing a user-specified global objective, thereby capturing a rich set of realistic interactions. We demonstrate the approach across a variety of autonomous driving and racing scenarios, achieving solutions in mere seconds on commodity hardware. 
\end{abstract}
\section{Introduction}
\label{sec:Intro}
\vspace{-5pt}
Modern robotic applications---from autonomous driving to assistive robotics---require behavior-aware motion planning that accounts for interactions between the ego robot and surrounding agents. These interactions often occur without explicit communication. Yet, to accomplish its mission effectively, the robot must reason about the behavior of other agents and the ways they respond to its actions. 
To address this challenge, this work seeks to develop game-theoretic motion planners that are both theoretically grounded and computationally efficient. 

The majority of approaches for behavior-aware motion planning consider a two-step process where the ego robot computes a best-response trajectory after predicting the actions of surrounding agents~\cite{rudenko2020human, lee2017desire,salzmann2020trajectron++,jiang2023motiondiffuser, schmerling2018multimodal}. 
Such approaches overlook that agents can update their actions based on other agents' behavior, leading to overly conservative solutions. 

More recent methods aim to model interactions between agents through a game-theoretic perspective, thereby assuming that agents are rational and optimize a private objective function (e.g., travel time, safety, lead over opponent), while accounting for the fact that other agents are involved in the same process.
For instance, some papers model interactions
via a Stackelberg equilibrium and develop approaches based on dynamic programming to approximate the
equilibrium \cite{fisac2019hierarchical}. However, Stackelberg games give an unfair advantage to one of the players \cite{le2022algames}, and hence lead to less realistic behavior. In the context of autonomous driving, such approaches can create overly aggressive maneuvers for one of the vehicles and be overly conservative for the rest.

More advanced methods model interactions via a Nash equilibrium (NE) \cite{nash1950equilibrium}, wherein no agent is better off by unilaterally deviating from the current solution. Although the game-theory community has extensively studied NE \cite{nisan2007agt,shoham2009mas}, most computationally-efficient results consider overly simplistic models that fail to capture 
the intricacies of real robotic systems, including continuous action and state spaces with non-convex constraints, and long-horizon strategies. Some papers have attempted to approximate an NE using relaxations and utilizing differential dynamic programming~\cite{fridovich2020efficient}, albeit with limited applicability to robotics due to the inability to capture collision-avoidance constraints. More specialized approaches that incorporate hard collision constraints and robot dynamics approximate NEs through optimization-based approaches \cite{wang2021game,le2022algames,spica2020real,le2021lucidgames}, albeit with limited theoretical justification. For example, \cite{le2022algames} utilizes a first-order approximation and, as such, cannot guarantee finding a global NE except in some special convex settings, which rarely occur in robotics. Specifically, the presence of obstacles, nonlinear dynamics, and multiple agents gives rise to multiple homotopy classes of NE solutions, which cannot be captured in a convex approximation. An iterative approach~\cite{wang2021game}, which relies on \cite{spica2020real}, falls short of the sufficient conditions for NE convergence, attaining only the necessary conditions. This is due to a lack of convexity in both cases, which follows from nonlinear dynamics and a non-convex state space. 

A different work~\cite{liniger2019noncooperative} considers a mathematically principled approach for efficiently obtaining a NE from a set of precomputed robot paths, under special conditions, such as right-of-way, which benefits one of the agents. Unfortunately, in a general setting, the approach boils down to a brute-force computation of the NE via the full payoff matrix, which is exponential in the number of robots. Moreover, the approach requires the explicit enumeration of all robot trajectories used, which can be costly if a rich set of strategies needs to be considered. 
A recent work~\cite{tikna2023graphs} 
invokes the bi-matrix game reasoning of \cite{liniger2019noncooperative} in a receding horizon fashion to choose the immediate next action in a Stackelberg game, where the robots are confined to motions on a precomputed quasi-kinodynamic \emph{graph} where edges represent Euler curves. Unfortunately, no equilibrium guarantees are given, and the use of Euler curves is highly problem- and model-specific.

An approach by LaValle and Hutchinson~\cite{lavalle2002optimal} also leverages graph search, explicitly constructing a frontier of all non-Pareto-dominated NE candidate trajectories for each node. While this yields equilibrium guarantees, it incurs a substantial computational burden due to the need to verify and maintain all such equilibria throughout the search. 

Learning-based approaches have been applied in behavior-aware settings in the context of head-to-head racing in computer games \cite{wurman2022outracing}, regulating mixed traffic flow \cite{wu2021flow}, and autonomous driving \cite{wang2025generativeaiautonomousdriving}, to name a few examples. However, as is typically the case with learning methods, they offer no explainability or generalizability and heavily rely on past experience for making correct decisions. Thus, their reliability remains unclear, especially in long tail events encompassing complex interactions between multiple agents \cite{wang2025generativeaiautonomousdriving}. 

\begin{figure}[t!]
  \centering
  \captionsetup[subfigure]{skip=2pt}
  \captionsetup{aboveskip=2pt, belowskip=0pt}

  \begin{subfigure}[b]{0.17\textwidth}
    \centering
    \begin{overpic}[%
        height=\linewidth,   
        trim=240pt 5pt 240pt 5pt, 
        clip,
        angle=90
    ]{./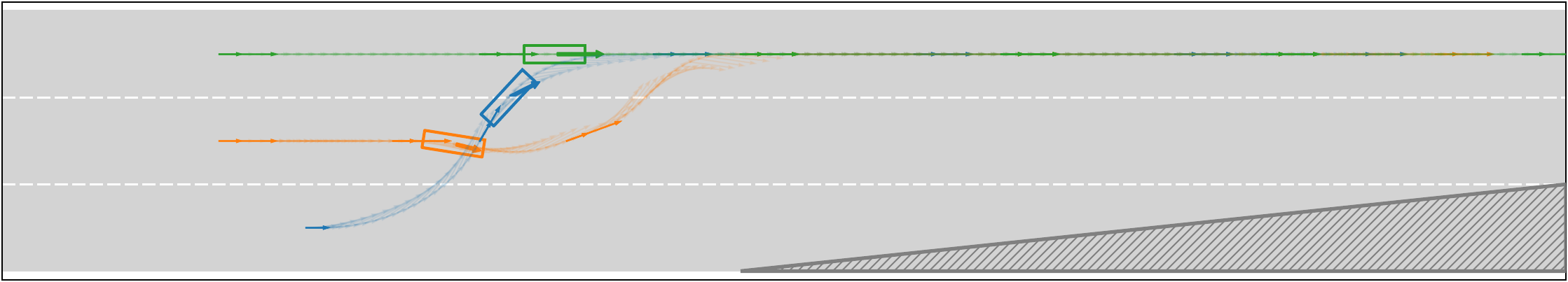}
      \put(10,91){%
        \fcolorbox{gray}{white}{\scriptsize
          \shortstack{(a) $\alpha = 0.00$\\$\lambda_{\mathrm{prox}} = 0.00$}%
        }%
      }
    \end{overpic}
    \phantomsubcaption\label{fig:highwaymerge000}
  \end{subfigure}\hspace{0.025\textwidth}
  \begin{subfigure}[b]{0.17\textwidth}
    \centering
    \begin{overpic}[%
        height=\linewidth,
        trim=240pt 5pt 240pt 5pt,
        clip,
        angle=90
    ]{./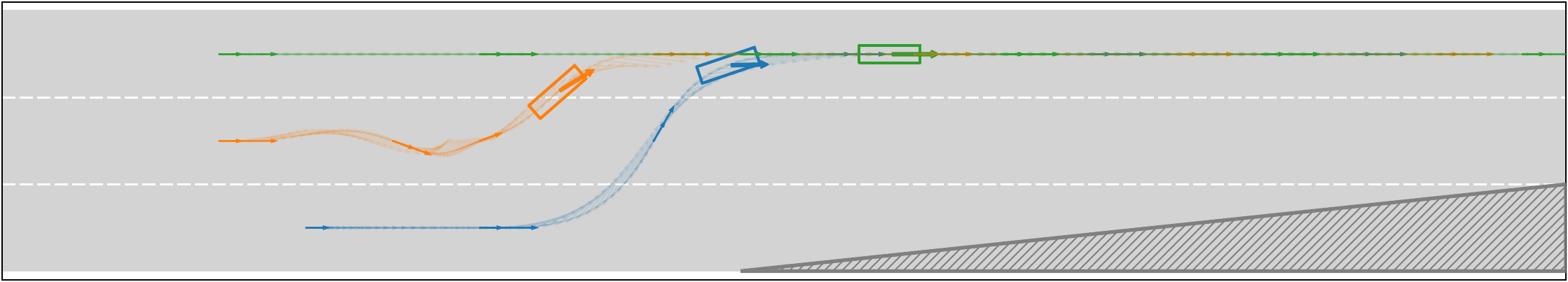}
      \put(10,91){%
        \fcolorbox{gray}{white}{\scriptsize
          \shortstack{(b) $\alpha = 0.25$\\$\lambda_{\mathrm{prox}} = 0.62$}%
        }%
      }
    \end{overpic}
    \phantomsubcaption\label{fig:highwaymerge025}
  \end{subfigure}\hspace{0.025\textwidth}
  \begin{subfigure}[b]{0.17\textwidth}
    \centering
    \begin{overpic}[%
        height=\linewidth,
        trim=240pt 5pt 240pt 5pt,
        clip,
        angle=90
    ]{./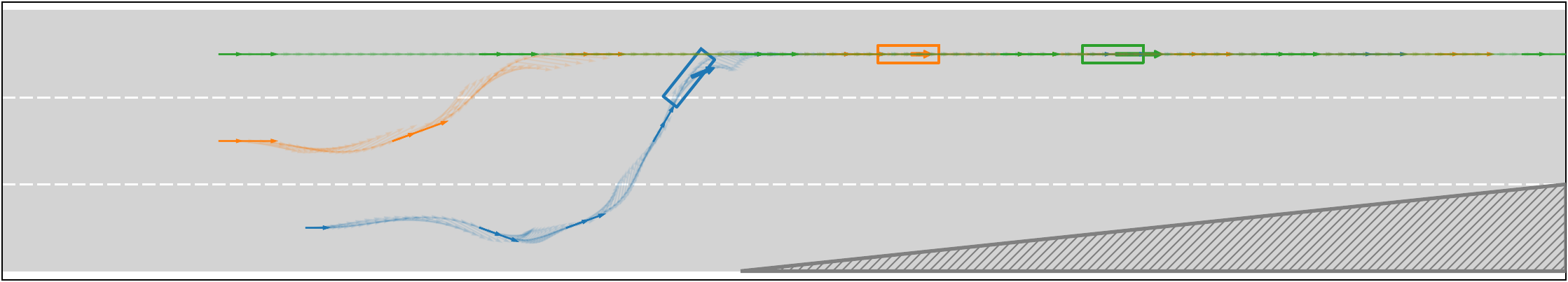}
      \put(10,91){%
        \fcolorbox{gray}{white}{\scriptsize
          \shortstack{(c) $\alpha = 0.75$\\$\lambda_{\mathrm{prox}} = 1.87$}%
        }%
      }
    \end{overpic}
    \phantomsubcaption\label{fig:highwaymerge075}
  \end{subfigure}\hspace{0.025\textwidth}
  \begin{subfigure}[b]{0.17\textwidth}
    \centering
    \begin{overpic}[%
        height=\linewidth,
        trim=240pt 5pt 240pt 5pt,
        clip,
        angle=90
    ]{./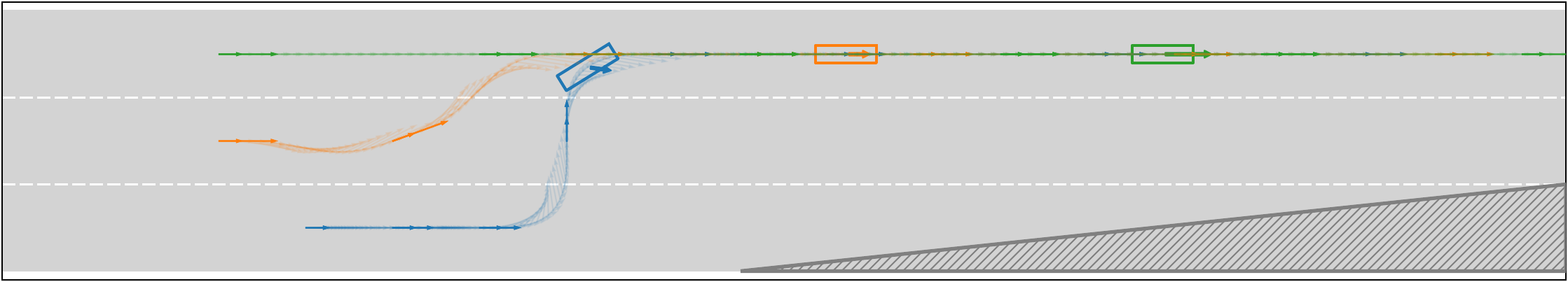}
      \put(10,91){%
        \fcolorbox{gray}{white}{\scriptsize
          \shortstack{(d) $\alpha = 1.00$\\$\lambda_{\mathrm{prox}} = 2.50$}%
        }%
      }
    \end{overpic}
    \phantomsubcaption\label{fig:highwaymerge100}
  \end{subfigure}
  \caption{Multi-lane merge: Robot 1 (blue) merges across Robot 2's (orange) lane into Robot 3's (green) lane. 
  Existing algorithms typically restricted to a single homotopy class, whereas our method---via tuning of the global objective (here, decreasing Robot 1's priority (larger $\alpha$ in $J{=}(1{-}\alpha)J^1{+}\alpha(J^2{+}J^3)$) and increasing proximity penalty weight $\lambda_{\mathrm{prox}}$)---yields distinct NE behaviors: (a) aggressive zip-merge; (b) zip-merge; (c) yield-then-merge; (d) over-cautious.}

  \label{fig:highwaymerge}
\end{figure}

\niceparagraph{Contribution.} We introduce an efficient and provably correct method, termed \algdfull, for computing NEs in motion planning.
Our method guarantees convergence to an NE without restricting the problem domain (e.g., convexity) and applies to general objective functions and dynamical constraints of individual robots. 
Furthermore, the approach enables the explicit tuning of the NE solution retrieved via a user-specified global objective function, ensuring that the returned NE minimizes this function among all available NEs (see, e.g., Fig.~\ref{fig:highwaymerge}).

At the core of our methodology is a nested-search approach consisting of an outer and inner search method. The outer search implicitly explores a tree of potential multi-robot trajectories, while the inner search is invoked to verify that a given solution satisfies the NE requirement by considering improving unilateral deviations of individual robots. For computational efficiency, our nested-search approach implicitly explores the joint state space of the multi-robot system via graphs that efficiently encode the trajectory space of individual robots while adhering to the robots' full dynamical constraints.

This search-based approach allows for an implicit encoding of trajectory-based constraints and incremental reuse of information, which is otherwise difficult to realize efficiently within an optimization-based approach \cite{9922600} or in methods that require explicit enumeration of all possible trajectories~\cite{liniger2019noncooperative,tikna2023graphs}.
Our algorithm has conceptual similarities to the LaValle-Hutchinson approach~\cite {lavalle2002optimal}, but crucially does not explicitly maintain all candidate NE trajectories, and employs a powerful pruning mechanism to discard non-NE trajectories (and their subtrees), rendering the search over complex and high-dimensional spaces tractable.

We demonstrate the approach on a variety of realistic autonomous driving and racing scenarios (including intersections, lane merges, following distance, opposing-lane passes, and racetrack overtakes) where we achieve solutions in mere seconds on commodity hardware.
From the theory side, our algorithmic framework leads to a rigorous but natural proof for the attainment of an NE.



Finally, we mention that our framework directly supports scalable generation of training data for generative AI and foundation models in autonomy~\cite{lee2017desire,salzmann2020trajectron++,jiang2023motiondiffuser,gao2025foundation}. It enables rich simulation of multi-agent dynamics across domains (traffic, racing, aerial swarms, aircraft sequencing, warehouses) and can efficiently synthesize rare, safety-critical cases for training and evaluation. 

\niceparagraph{Organization.} Sec.~\ref{sec:Preliminaries} provides preliminaries and our problem statement. Sec.~\ref{sec:graph} considers a graph-based formulation of the above problem. Sec.~\ref{sec:Algorithm} describes our \algd algorithm, whose theoretical guarantees are  discussed in Sec.~\ref{sec:theory}. Sec.~\ref{sec:Experiments} details  experimental results. Conclusion and future work appears in Sec.~\ref{sec:conclusion}.


\section{Preliminaries} 
\label{sec:Preliminaries}

We begin by introducing the notation for kinodynamic multi-robot motion planning, then formalize the underlying game-theoretic planning problem. Consider $m \geq 2$ robots indexed by $i \in [m] := \{1, \dots, m\}$.  
Each robot $i$ has a state space $\mathcal{X}^i \subseteq \mathbb{R}^{d_i}$ and a control space $\mathcal{U}^i \subseteq \mathbb{R}^{D_i}$.  
Its motion is governed by the 
dynamics
\begin{equation}
    \dot{x}^i(t) = f^i\bigl(x^i(t), u^i(t)\bigr),
    \quad x^i(t) \in \mathcal{X}^i,\; u^i(t) \in \mathcal{U}^i.
    \label{eq:MotionIndividualRobot}
\end{equation}
Additionally, the robot must remain within the free space $\mathcal{X}_f^i \subseteq \mathcal{X}^i$ to avoid collisions with static obstacles (collisions with other robots discussed below).  

For a given planning horizon $T > 0$, let $\mathcal{U}^i_T$ be the set of admissible control functions $u^i:[0,T]\mapsto \mathcal{U}^i$ for robot $i$.  
Given an initial state $x_0^i \in \mathcal{X}^i$ and a control function $u^i \in \mathcal{U}^i_T$, the induced trajectory is the mapping
$
\pi^i_{x_0^i,u^i} : [0,T] \mapsto \mathcal{X}^i,
$
where $\pi^i_{x_0^i,u^i}(\cdot)$ is a solution to Eq.~\eqref{eq:MotionIndividualRobot} with $\pi^i_{x_0^i,u^i}(0)=x_0^i$.  The set of feasible trajectories for robot $i$ is
$
\Pi^i_T := \{\, \pi^i_{x_0^i,u^i} \mid x_0^i \in \mathcal{X}^i,\; u^i \in \mathcal{U}^i_T \,\}.
$

Next, we generalize the above definitions to encompass the full multi-robot system. The joint state space of the $m$ robots is denoted by $\X := \times_{i=1}^m \X^i$, i.e.,  a point in $\X$ describes the simultaneous state of all the robots in the system. Similarly, the joint control space is $\mathcal{U} := \times_{i=1}^m \mathcal{U}^i$ and the joint control space within specific time horizon $T$ is  $\mathcal{U}_T := \times_{i=1}^m \mathcal{U}^i_T$. 
Given initial joint state $x_0=(x_0^1,\ldots,x_0^m) \in \mathcal{X}$ and joint control $u=(u^1,\dots,u^m)\in \mathcal{U}_T$, the induced joint trajectory $\pi_{x_0,u}=(\pi_{x_0^1,u^1}^1,\ldots, \pi_{x_0^m,u^m}^m)$ specifies the trajectories of each of the $m$ robots, and can be interepreted as the mapping
$
\pi_{x_0,u} : [0,T] \mapsto \mathcal{X}.
$
The set of all joint trajectories is
$
\Pi_T := \{\, \pi_{x_0,u} \mid x_0 \in \mathcal{X},\; u \in \mathcal{U}_T \,\}.
$

Each robot $i$ evaluates its performance via a cost functional $
J^i : \Pi^i_T \times \Pi^{-i}_T \mapsto \mathbb{R}_{\ge 0},$
where $\Pi^{-i} := \times_{j\neq i}\Pi^j$ is the set of trajectories of the remaining robots with indices $[m]\setminus \{i\}$.  
For trajectories $\pi^i \in \Pi^i_T$ and $\pi^{-i}\in \Pi^{-i}_T$, we define
\begin{equation}
\label{eq:IndividualCost}
    {\small J^i(\pi^i, \pi^{-i})
    := \int_0^T c^i\bigl(\pi^i(t), \pi^{-i}(t)\bigr)\, \mathrm{d}t,}
\end{equation}
with stage cost $c^i$ depending on states and controls.  

\subsection{Problem Definition}
We seek to compute joint trajectories that satisfy the following requirement. 

\begin{definition}[Nash Equilibrium]
A joint trajectory $\pi = (\pi^1,\dots,\pi^m) \in \Pi_T$ is a (pure) Nash equilibrium (NE) if, for every robot $i \in [m]$ and for all alternative feasible trajectories $\tilde \pi^i \in \Pi^i_T$ (from the same start state), no robot can reduce its cost by 
deviating while the other robots fix their trajectories, i.e.,
    $J^i(\pi^i, \pi^{-i}) \;\leq\; J^i(\tilde \pi^i, \pi^{-i})$.
\end{definition}

Moreover, we wish to compute an NE that minimizes a given global cost.  

\begin{problem}[Optimal NE Planning]\label{problem:NE}
Fix a planning horizon $T>0$ and consider a global cost functional 
\begin{equation}
\label{eq:GlobalCost}
   {\small J(\pi) := \int_0^T c\bigl(\pi^1(t),\ldots, \pi^m(t)\bigr)\, \mathrm{d}t ,
   \quad \pi \in \Pi_T,}
\end{equation}
with stage cost $c$ depending on joint states and controls. We wish to find a joint trajectory $\pi^* \in \Pi_T$ (and its corresponding control laws) such that: (1) $\pi^*$ is a Nash equilibrium (Def.~\ref{problem:NE}) that minimizes the global cost $J(\pi^*)$ among all NEs; (2) the final state of $\pi^*$ is in the joint goal region, i.e.,  $\pi^*(T)\in \mathcal{X}_{\mathrm{goal}} := \times_{i=1}^m \mathcal{X}^i_{\mathrm{goal}}$; (3) $\pi(t)\in \mathcal{X}_f$ for all $t \in [0,T]$.
\end{problem}

\niceparagraph{Discussion.}  We consider an \emph{open-loop} setting: an NE solution is computed once for a horizon and then executed, rather than replanned at every time step. This setting is common in robotics (see, e.g.,~\cite{ZhuB23}), motivated by the high computational demands of computing NEs, which currently preclude real-time feedback control. The resulting NE solutions promote stable execution, since local controllers cannot improve their cost by deviating from the prescribed trajectory.  


We seek \emph{global} NEs, where robots may choose any action in their control spaces 
This contrasts with \emph{local} NEs, which are implicitly considered in, e.g.,~\cite{le2022algames,wang2021game,spica2020real,rowold2024open,fridovich2020efficient}, where the solution is restricted to a subset of the joint state space (e.g. due to linearization and convexification). Local NEs may not correspond to any global NE, leading to unstable or otherwise poorly chosen execution. See illustration in App.~\ref{app:figures}, Fig.~\ref{fig:localvsglobal}. Another key difference from prior work is that we require the chosen NE to optimize a global objective. This promotes predictability and explainability, as (under mild conditions) a unique global optimum exists, ensuring all robots converge to the same NE. 


A remaining challenge is incentivizing robots to follow the chosen solution. An NE optimizing a specific cost function $J$ may favor certain robots, motivating others to defect. In cooperative domains (e.g., driving), $J$ can encode social welfare or fairness and may even be set by an external arbitrator (e.g., to resolve right-of-way). In competitive settings, the more general solution concept of correlated equilibrium (CE)~\cite{aumann1974subjectivity} could be applied: a mediator samples from a distribution and privately recommends actions, which can enable coordination on outcomes that are more efficient or fair. We leave this for future work.

\begin{figure}[t]
  \centering
  \includegraphics[width=0.9\textwidth]{./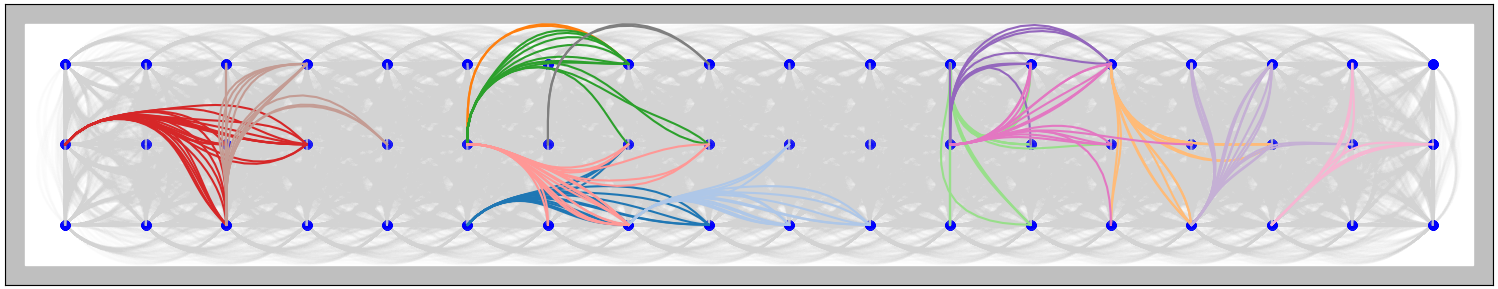}
  \caption{\small Kinodynamic grid graph. At each $(x,y)$ lattice coordinate (blue dots) there are various $\theta,\ v$ and $\delta$ values. Graph edges are gray; a randomly selected subset is emphasized and colored by the source state.}
  \label{fig:gridgraph}
\end{figure}

\section{Graph-Based setting \label{sec:graph}}
To tackle Problem~\ref{problem:NE}, we draw inspiration from previous work on multi-robot motion planning, wherein the curse of dimensionality, inherent in exploring the joint state space, is alleviated by first capturing the state space of the individual robots via discrete graphs, and then planning their collective motion on an implicitly-defined tensor graph~\cite{solovey2016finding,ShomeSDHB20,lavalle2002optimal}. In our setting, however, we utilize specialized constructions of those graphs to accommodate differential constraints \eqref{eq:MotionIndividualRobot} and, most importantly, game-theoretic aspects. In preparation for our algorithm, which is described in the next section,  we describe here the graph-based setting, as well as a graph-based counterpart of Problem~\ref{problem:NE}.

For each robot $i \in [m]$, we construct a \emph{directed} graph 
$
G^i := (V^i,E^i),
$
that embodies its dynamically feasible motions.  
The vertex set $V^i \subseteq \mathcal{X}^i$ consists of sampled states, including the start state $x_0^i$ and states in the goal region $\mathcal{X}^i_{\mathrm{goal}}$.  
Each edge $e^i=(x^i,y^i) \in E^i$ corresponds to a dynamically feasible local trajectory $\pi^i_{e^i}$ from the state $x^i$ to the state $y^i$ obtained under some control law $u^i_{e^i} \in \mathcal{U}^i_{\Delta t}$, executed over a fixed duration $\Delta t > 0$ from the state $x^i\in \X^i$.

Every edge, therefore, represents a time-parameterized trajectory of duration $\Delta t$, making $G^i$ a kinodynamic variant of the probabilistic roadmap (PRM)~\cite{kavraki1996probabilistic}, as used in, e.g.,~\cite{SchmerlingJP15,schmerling2015optimal}. We note that each edge in the graph is a curve (kinodynamic-adhering trajectory) induced by the corresponding control function and not a straight line. 
We consider an explicit representation of each $G^i$; more efficient semi-implicit constructions~\cite{ShomeK21,HonigHT22} are left for future work.

From the individual graphs $\{G^i\}_{i\in[m]}$, we form the \emph{tensor-product directed graph} $G=(V,E)$ with
$V = \times_{i=1}^m V^i$, and  $E = \times_{i=1}^m E^i$.     
A joint vertex 
    $x=(x^1,\dots,x^m)\in V$ 
encodes a joint state of the $m$ robots, while a joint edge $e=(e^1,\dots,e^m)\in E$ represents a local joint trajectory $\pi_e=(\pi^1_{e^1},\ldots,\pi^m_{e^m})$ synchronized motion where each robot $i$ follows the trajectory $\pi^i_{e^i}$ associated with the edge $e^i\in E^i$ during the same duration $\Delta t$.  
Since $|V|$ and $|E|$ grow exponentially in $m$, we never construct $G$ explicitly; instead, we explore it implicitly by combining neighbors from the explicit graphs $\{G^i\}$, where an A*-search directs which portion of $G$ to explore next (see below).

Next, we adapt Problem~\ref{problem:NE} to the graph-based setting. Fix a number of steps $n \in \mathbb{N}_{+}$, such that $n\Delta t$ is equal to the planning horizon $T > 0$.  
Let a trajectory $\pi^i_n$ of length $n$ consist of a sequence of $n$ edges on the graph $G_i$.
That is, given the graph path $e_1^i,\ldots,e_n^i$, the trajectory $\pi^i_n$ is the concatenation of the trajectories $\pi^i_{e_1^i},\ldots,\pi^i_{e_n^i}$. 
We denote by $\Pi^i_n$ the set of $n$-step trajectories of robot $i$. 
We similarly define the joint trajectory set $\Pi_n:=\times_{i=1}^n \Pi^i_n$. Notice that each joint trajectory $\pi_n\in \Pi_n$ corresponds to a path on the graph $G$. 

\begin{definition}[Graph Nash Equilibrium]
\label{def:DiscreteNash}
For a horizon of $n$ steps, a joint discrete trajectory $\pi_n \in \Pi_n$ is a \emph{graph Nash equilibrium (gNE)} if, for every robot $i \in [m]$ and every alternative trajectory $\tilde \pi^i_n \in \Pi^i_n$ (from the same start state), no robot can reduce its cost by unilaterally deviating to another feasible $n$--step trajectory, i.e.,
    $J^i(\pi^i_n, \pi^{-i}_n) \;\leq\; J^i(\tilde \pi^i_n, \pi^{-i}_n)$.
\end{definition}

\begin{problem}[Optimal gNE Planning]\label{problem:DiscreteNE}
Fix $n > 0$. Given a global cost 
$J:\Pi_n \mapsto \mathbb{R}_{\geq 0}$, 
the objective is to find $\pi^*_n \in \Pi_n$ such that: (1) $\pi^*_n$ is a gNE (Def.~\ref{def:DiscreteNash}) that minimizes the global cost: $J(\pi^*_n)$ among all gNEs; (2) $\pi^*_n$ reaches the joint goal region $\mathcal{X}_{\mathrm{goal}}$ in $n$ steps; (3) $\pi^*_n$ is collision--free, i.e., all intermediate joint states and the continuous motions along edges respect $\mathcal{X}_f$.
\end{problem}

\niceparagraph{Discussion.} We discuss the interrelation between Problem \ref{problem:NE} and Problem \ref{problem:DiscreteNE}. Previous work has shown that for the geometric setting (i.e., without kinodynamic constraints), the graphs $G^1,\ldots, G^m$ can be constructed such that any feasible multi-robot trajectory in the continuous state space of the robot group can be approximated arbitrarily well in the graph $G$~\cite{ShomeSDHB20,Dayan.ea.23}. 
The extension to the kinodynamic setting can be done via recent analysis techniques for asymptotically optimal sampling-based planning~\cite{kleinbort2020refined}. We leave this technical proof for future work. We note that the points highlighted in the discussion in Sec.~\ref{sec:Preliminaries}, are equally applicable to the graph setting.

\section{Algorithm for graph NE} \label{sec:Algorithm}
We describe our algorithmic approach, \algdfull, to tackle Problem~\ref{problem:DiscreteNE}. 
The algorithm employs a nested approach, consisting of an outer search (Alg.~\ref{alg:DSP}) and an inner search (Alg.~\ref{alg:isEquilibriumStar}).
Both levels of search are performed by A*-based search algorithms \cite{hart1968formal}, albeit on different graphs and for different purposes. We describe the  high-level structure of the algorithm  (see additional implementation optimizations in App.~\ref{app:speedup}).

\niceparagraph{Outer search.}
The \algd algorithm (Alg.~\ref{alg:DSP}) explores the implicit tensor-product graph $G$ induced by the per-robot graphs $\{G^i\}_{i\in[m]}$ each produced by \texttt{generateRobotRoadmap}. These individual graphs can be precomputed once (offline) and reused across scenarios for the same robot platform. By construction, motion along edges of $G$ is dynamically feasible for all robots; we additionally enforce collision avoidance and the gNE condition.

The A* search maintains a priority queue \texttt{OPEN} keyed by $J(\pi_x)+H(x)$ (where $H(\cdot)$ is a cost-to-go heuristic)) and a predecessor map $P$ for path reconstruction. At each iteration, the algorithm:
(i) extracts the minimum-key node $x$ (line~\ref{alg:SelectNodeLine}); 
(ii) recovers the current best path $\pi_x$ from the root using $P$ (line~\ref{alg:TrajectoryFromRootLine}); and
(iii) enumerates neighbors $\Xnear$ of $x$ in $G$ (line~\ref{alg:PullNeighborsLine}). 
Here, a joint node $y=(y^1,\dots,y^m) \in V$ belongs to $\Xnear$ iff for every $i$ there is an edge $(x^i, y^i)\in E^i$. 

For each potential neighbor $\xnew\in\Xnear$, the algorithm (a) verifies that the local edge trajectory $\pi_{x\to \xnew}$ is collision-free (line~\ref{alg:CollisionCheckLine}); (b) forms the candidate path 
$\pi_{\xnew} := \texttt{concatenate}(\pi_x,\pi_{x\to\xnew})$ (line~\ref{alg:ConcatLine}); and 
(c) checks improvement w.r.t.\ any incumbent path to $\xnew$ (line~\ref{alg:BetterInClosedLine}), where the incumbent is recovered via $P$ (line~\ref{alg:OldTrajLine}) and updated (line~\ref{alg:UpdateParentLine}). 
It then calls the predicate \algisne\ on $\pi_{\xnew}$ (line~\ref{alg:LowLevelSearchCall}); only if the candidate path satisfies the gNE condition and offers an improved cost over the incumbent path (similar to standard A* dominance checks), is $\xnew$ inserted into \texttt{OPEN} with key $J(\pi_{\xnew})+H(\xnew)$. If a goal node is popped and validated, the corresponding trajectory is returned.
{\small
\begin{algorithm}[h!]
  \caption{\algd}
  \label{alg:DSP}

  \texttt{OPEN} $\gets \{x_0\}$; \quad $P(x)\gets \emptyset,\ \forall x\in V$ \\
  $\left\{G^i \gets \texttt{generateRobotRoadmap}(f^i{,}J^i{,}x_0^i{,}\X_{\text{goal}}^i)\right\}_{i\in [m]}$ \\ [-0.25em]

  \While{\KwNot \textup{\texttt{OPEN.empty()}}}{
    $x \gets$ \texttt{OPEN.extractMin()} \\ \label{alg:SelectNodeLine}
    $\pi_x \gets \texttt{trajFromRoot}(P, x)$ \\ \label{alg:TrajectoryFromRootLine}
    $\Xnear \gets G.\texttt{neighbors}(x)$ \\ \label{alg:PullNeighborsLine}

    \For{\textup{$\xnew \in \Xnear$}}{ \label{alg:ExtendNodeLine}
      \If{\textup{\texttt{collisionFreeEdge}$(\pi_{x\to \xnew})$}}{
        \label{alg:CollisionCheckLine}
        $\pi_{\xnew} \gets \texttt{concatenate}(\pi_x, \pi_{x\to \xnew})$ \\ \label{alg:ConcatLine} 
        $\pi_{\xnew}^{\textup{old}} \gets \texttt{trajFromRoot}(P, \xnew)$ \Comment{incumbent traj. to $\xnew$} \\ \label{alg:OldTrajLine}

        \If{\textup{\texttt{notYetVisited}$(\xnew)$ \KwOr $J(\pi_{\xnew}) < J(\pi_{\xnew}^{\textup{old}})$}}{
          \label{alg:BetterInClosedLine}
          \If{\algisne$(\pi_{\xnew})$}{
            \label{alg:LowLevelSearchCall}
            $P(\xnew)\gets x$  \Comment{store parent} \\ \label{alg:UpdateParentLine}
            $\texttt{OPEN.insert}(\xnew, J(\pi_{\xnew}) + H(\xnew))$ \\

            \If{$\xnew \in \Xgoal$}{
              \Return{$\pi_{\xnew}$} \\
            }
          }
        }
      }
    }
  }
  \Return{$\emptyset$} \\
\end{algorithm}
}

\niceparagraph{Inner search.}
\algisne\ is a best-response oracle: it solves a single-robot motion planning problem on $G^i$ to see whether any agent has a strictly cheaper unilateral deviation of the same horizon that is dynamically feasible and collision-free w.r.t.\ others’ fixed paths. For each candidate joint trajectory $\pi_{\xcand}=(\pi^i_{\xcand^i},\pi^{-i}_{\xcand^{-i}})$ 
 encountered by \algd, \algisne\ (Alg.~\ref{alg:isEquilibriumStar}) runs, for each $i\in[m]$, an A* search on $G^i$ subject to three simple constraints: (1) The search goal is set (line~\ref{alg:initspecificgoal}) to the \emph{specific} terminal configuration $\xcand^i$ appearing at the end of $\pi^i_{\xcand^i}$ (no goal \emph{region}); (2) Only paths of \emph{exactly} $n$ edges (garnered by the method $\texttt{numEdges}$) are considered competitive (the same length as $\pi_{\xcand}$). Nodes are only considered until the partial path length $d$ reaches $n$; (3) Edges for robot $i$ are collision-checked ($\texttt{collisionFreeEdge}$, line~\ref{alg:DynamicObstaclesCollisionCheckLine}) against the \emph{fixed, time-parameterized} trajectories of the other robots $\pi^{-i}_{\xcand^{-i}}$.

Operationally, for each $i$ the routine initializes \texttt{OPEN} and a predecessor map $P^i$, then performs the usual A$^\ast$ node selection, neighbor expansion, concatenation, and incumbent comparison on the single-robot graph $G^i$ (with key $J^i(\cdot)+H^i(\cdot)$), exactly mirroring the outer loop but \emph{only} for agent $i$ and under the constraints above. If it ever reaches $\xcand^i$ in exactly $n$ steps with a trajectory $\tilde\pi^i_{\xcand^i}$ yielding 
$J^i(\tilde\pi^i_{\xcand^i}) < J^i(\pi^i_{\xcand^i})$, the routine immediately returns \textsc{False} (the candidate joint trajectory does not meet the gNE conditions). If no robot can improve, the routine returns \textsc{True}. 
{\small
\begin{algorithm}[h!]
\caption{\algisne$(\pi_{\xcand})$}
\label{alg:isEquilibriumStar}

  $\xcand^i \gets$ last vertex in $\pi^i_{\xcand^i}$ \Comment{$\pi_{\xcand} = (\pi^i_{\xcand^i}, \pi^{-i}_{\xcand^{-i}})$} \\ \label{alg:initspecificgoal}
  $n \gets \texttt{numEdges}(\pi_{\xcand})$ \\

  \For{$i \in [m]$}{
    \texttt{OPEN} $\gets \{x^i_0\}$; \quad $P^i(x^i) \gets \emptyset,\ \forall x^i \in V^i$ \\
    \While{\textup{\texttt{not OPEN.empty()}}}{
      $x^i \gets$ \texttt{OPEN.extractMin()} \\
      $\tilde{\pi}^i_{x^i} \gets \texttt{trajFromRoot}(P^i, x^i)$ \\
      $d \gets \texttt{numEdges}(\tilde{\pi}^i_{x^i})$ \\

      \If{$d \geq n$}{
        \textbf{continue} \\
      }

      $\Xnear^i \gets G^i.\texttt{neighbors}(x^i)$ \\

      \For{$\xnew^i \in \Xnear^i$}{
        \If{\textup{$\texttt{collisionFreeEdge}(\tilde{\pi}^i_{x^i \to \xnew^i}, \pi^{-i}_{\xcand^{-i}})$}}{
         \label{alg:DynamicObstaclesCollisionCheckLine}
          $\tilde{\pi}^i_{\xnew^i} \gets \texttt{concatenate}(\tilde{\pi}^i_{x^i}, \tilde{\pi}^i_{x^i \to \xnew^i})$ \\
          $\tilde{\pi}_{\xnew^i}^{i,\ \textup{old}} \gets \texttt{trajFromRoot}(P^i, \xnew^i)$ \\

          \If{\textup{\texttt{notYetVisited}$(\xnew^i)$ \KwOr $J^i(\tilde{\pi}^i_{\xnew^i}) < J^i(\tilde{\pi}_{\xnew^i}^{i,\ \textup{old}})$}}{
            $P^i(\xnew^i) \gets x^i$ \\
            $\texttt{OPEN.insert}(\xnew^i, J^i(\tilde{\pi}^i_{\xnew^i}) + H^i(\xnew^i))$ \\

            \If{$\xnew^i = \xcand^i$ \KwAnd $d+1 = n$}{
              \If{$J^i(\tilde{\pi}^i_{\xcand^i}) < J^i(\pi^i_{\xcand^i})$}{
                \Return{FALSE} \\
              }
            }
          }
        }
      }
    }
  }
  \Return{TRUE}
\end{algorithm}
}


\section{Theoretical guarantees}\label{sec:theory}
We prove that Alg.~\ref{alg:DSP} provides an optimal solution to Problem~\ref{problem:DiscreteNE}. 
In preparation, we obtain the following lemma that states that the NE property is monotone, i.e., every subtrajectory of a NE trajectory is also NE. 
Given a trajectory $\pi_n$ of $n$ edges, the notation $\pi_{n'}$ for some $0<n'<n$ denotes the subtrajectory of $\pi_n$ consisting of the first $n'$ edges. Namely, $\pi_{n'}$ has a duration of $n'\cdot \Delta t$ and $\pi_{n'}(\tau)=\pi_{n}(\tau)$ for any $0\leq \tau\leq \Delta t\cdot n'$.


\begin{lemma}[Monotonicity of Nash Equilibrium]
    \label{lem:NashMonotonicity}
     If a trajectory $\pi_n$ is a gNE (Def.~\ref{def:DiscreteNash}) then any subtrajectory $\pi_{n'}$, where $0\leq n'< n$, is a gNE too. 
 \end{lemma}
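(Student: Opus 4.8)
The plan is to argue by contradiction, exploiting the fact that the cost functionals $J^i$ in Eq.~\eqref{eq:IndividualCost} are time-integrals and hence additive along a trajectory. Concretely, suppose $\pi_n$ is a gNE but, for some $0 \le n' < n$, the prefix $\pi_{n'}$ is \emph{not} a gNE. Then by Def.~\ref{def:DiscreteNash} there exist a robot $i$ and an alternative $n'$-step trajectory $\tilde\pi^i_{n'} \in \Pi^i_{n'}$ (starting at $x_0^i$ and terminating at the same vertex $\pi^i_{n'}(n'\Delta t)$ as the incumbent prefix) that strictly lowers robot $i$'s cost: $J^i(\tilde\pi^i_{n'},\pi^{-i}_{n'}) < J^i(\pi^i_{n'},\pi^{-i}_{n'})$.

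First I would lift this prefix deviation to a full-horizon deviation $\hat\pi^i_n \in \Pi^i_n$ by concatenating $\tilde\pi^i_{n'}$ with the tail of $\pi^i_n$, i.e., the last $n-n'$ edges $e^i_{n'+1},\dots,e^i_n$ of robot $i$'s path. The crucial point is that $\tilde\pi^i_{n'}$ ends exactly at $\pi^i_{n}(n'\Delta t)$ --- the source vertex of edge $e^i_{n'+1}$ --- so the two pieces share an endpoint and the concatenation is a genuine path in $G^i$; dynamical feasibility and collision-freeness against the fixed $\pi^{-i}_n$ are then inherited piecewise (the prefix from the validity of the deviation, the tail from $\pi_n$ being a feasible joint trajectory).

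Next I would decompose both costs at time $n'\Delta t$. Because the other robots follow the \emph{same} $\pi^{-i}_n$ in both the deviation and the equilibrium, and because $\hat\pi^i_n$ coincides with $\pi^i_n$ on $[n'\Delta t,\, n\Delta t]$, the two trajectories share an identical tail integral $R := \int_{n'\Delta t}^{n\Delta t} c^i(\pi^i_n(t),\pi^{-i}_n(t))\,\dd t$. Additivity then gives $J^i(\hat\pi^i_n,\pi^{-i}_n) = J^i(\tilde\pi^i_{n'},\pi^{-i}_{n'}) + R$ and $J^i(\pi^i_n,\pi^{-i}_n) = J^i(\pi^i_{n'},\pi^{-i}_{n'}) + R$, so the tail cancels and the strict prefix improvement transfers verbatim: $J^i(\hat\pi^i_n,\pi^{-i}_n) < J^i(\pi^i_n,\pi^{-i}_n)$. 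This exhibits a profitable unilateral deviation for robot $i$ over the full horizon, contradicting the assumption that $\pi_n$ is a gNE and establishing the lemma.

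The step I expect to be the main obstacle is the concatenation in the first move: it is valid only because deviations are required to terminate at the same vertex as the incumbent prefix --- precisely the constraint the inner search enforces at line~\ref{alg:initspecificgoal}. Were deviations allowed to end at an arbitrary vertex, a cheaper $n'$-prefix might terminate at a state from which every $n-n'$-step continuation is costly, so that the extension --- and indeed monotonicity itself --- could fail. I would therefore make this terminal-matching hypothesis explicit and verify that the pasted trajectory $\hat\pi^i_n$ is both an admissible path in $G^i$ and collision-free throughout, after which the additive cost argument closes the proof cleanly.
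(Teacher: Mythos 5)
Your proposal is correct and follows essentially the same route as the paper's own proof: assume a prefix admits a profitable deviation, paste that deviation onto robot $i$'s original tail (valid because the deviation terminates at the same vertex as the incumbent prefix), and use additivity of $J^i$ to cancel the shared tail cost and contradict the full-horizon gNE property. Your closing remark about making the terminal-matching hypothesis explicit is well taken --- the paper's proof likewise assumes deviations share both start and final states (matching the constraint enforced at line~\ref{alg:initspecificgoal}), even though Def.~\ref{def:DiscreteNash} as written only mentions the start state.
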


\begin{proof}
First, recall that we consider  additive cost functions $J,\{J^i\}$. 
 Suppose that $\pi_n$ is a  gNE, namely for any robot $i\in [m]$, and any alternative trajectory $\tilde{\pi}_n^i\in \Pi^i_n$, 
 with the same start and final $\X^i$ state, i.e., $\pi^i_n(0)=\tilde{\pi}^i_n(0)$, and $\pi^i_n(n\cdot \Delta t)=\tilde{\pi}^i_n(n\cdot \Delta t)$, respectively, it holds that 
	$J^{i}(\pi_n^i,\pi_n^{-i}) \leq  J^{i}(\tilde{\pi}_n^i,\pi_n^{-i})$. 

Assume, by contradiction, that the subtrajectory $\pi_{n'}$, for some $0<n'<n$, is not a gNE: for some robot $i \in [m]$, there exists $\tilde{\pi}^i_{n'}\in \Pi^i_{n'}$ satisfying the collision constraints, $\pi^i_{n'}(0)=\tilde{\pi}^i_{n'}(0)$, and $\pi^i_{n'}(n'\cdot \Delta t)=\tilde{\pi}^i_{n'}(n'\cdot \Delta t)$ and inequality 
      $J^i(\pi^i_{n'}, \pi^{-i}_{n'}) > J^i(\tilde{\pi}^i_{n'}, \pi^{-i}_{n'})$.
Consider the remaining portion of  $\pi_{n}$ from the $n'+1$ edge, denoted by  $\pi_{n'+1:n}$. I.e., $\pi_{n'+1:n}$ has a duration of $(n-n'-1)\cdot \Delta t$ and $\pi_{n'+1:n}(\tau)=\pi_{n}(\tau + n'\cdot \Delta t)$. Next, denote by $\tilde{\pi}^i_n$ the concatenation of  $\tilde{\pi}^i_{n'}$ and $\pi_{n'+1:n}$. This new trajectory is valid as $\tilde{\pi}^i_{n'}$ ends where $\pi_{n'+1:n}$ begins, and both those subtrajectories are collision free. By exploiting the additivity of the cost function $J^i$, we upper bound the $J^i$ cost resulting from  $\tilde{\pi}^i_{n'}$: 
{\small\begin{align*}
     J^i(\tilde{\pi}^i_{n}, \pi^{-i}_{n})& = J^i(\tilde{\pi}^i_{n'}, \pi^{-i}_{n'}) + J^i(\pi^i_{n'+1:n}, \pi^{-i}_{n'+1:n}) \\ & 
    < J^i(\pi^i_{n'}, \pi^{-i}_{n'}) + J^i(\pi^i_{n'+1:n}, \pi^{-i}_{n'+1:n})  = J^i(\pi^i_{n}, \pi^{-i}_{n}). 
\end{align*} }
This contradicts $\tilde{\pi}_n$ being a gNE. Thus, the gNE property is monotone. \qed 
\end{proof}

Combining the above lemma, with the optimality guarantee of A*, which is the backbone of both Alg.~\ref{alg:DSP} and~\ref{alg:isEquilibriumStar}, we establish the correctness of \algd.

\begin{theorem}
     \algd finds the optimal solution to Problem~\ref{problem:DiscreteNE}.  
\end{theorem}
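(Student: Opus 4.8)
The plan is to reduce the theorem to two guarantees and then combine them: (i) the inner routine \algisne\ is a sound and complete decision procedure for the gNE predicate of Def.~\ref{def:DiscreteNash}, and (ii) the outer routine \algd\ is an A$^\ast$ search whose gNE filter never destroys optimality, which I would establish by pairing the optimality of A$^\ast$ with the prefix-closedness granted by Lemma~\ref{lem:NashMonotonicity}. Collision-freeness and goal-reaching are enforced directly by the algorithm (the edge collision test and the terminal check $\xnew\in\mathcal{X}_{\mathrm{goal}}$), so the substance is the interaction of the gNE constraint with the search.

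First I would handle the inner search. Fix a candidate joint trajectory $\pi_{\xcand}$ of length $n$ and a robot $i$. The routine \algisne\ runs A$^\ast$ on the single-robot graph $G^i$ with key $J^i(\cdot)+H^i(\cdot)$, restricted to paths of exactly $n$ edges that terminate at $\xcand^i$ and are collision-free against the fixed, time-parameterized trajectories $\pi^{-i}_{\xcand^{-i}}$. Assuming $H^i$ is admissible (and that the step index is carried in the search state, so the ``exactly $n$ edges'' requirement and the time-stamped collision tests are well posed), A$^\ast$ returns the minimum-$J^i$-cost feasible deviation. Comparing this optimum against the incumbent cost $J^i(\pi^i_{\xcand^i})$ therefore correctly detects whether robot $i$ has a strictly profitable unilateral deviation; quantifying over all $i\in[m]$ shows that \algisne\ returns \textsc{True} exactly when $\pi_{\xcand}$ satisfies Def.~\ref{def:DiscreteNash}, making the gNE predicate used by the outer loop exact.

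Next I would treat the outer search as a constrained shortest-path computation: A$^\ast$ on the tensor graph $G$, keyed by $J(\pi_x)+H(x)$, in which a node is retained only if the path reaching it passes the gNE filter. The role of Lemma~\ref{lem:NashMonotonicity} is to guarantee \emph{completeness} of this filtered search: since every prefix $\pi_{n'}$ of a gNE trajectory is again a gNE, the entire sequence of prefixes of the optimal gNE goal trajectory $\pi^*_n$ is gNE, so the filter never rejects any prefix lying on the way to $\pi^*_n$. Combined with the standard A$^\ast$ optimality argument --- with an admissible $H$, the first goal node popped from \texttt{OPEN} carries a minimum-cost path --- I would conclude that \algd\ returns a gNE trajectory reaching $\mathcal{X}_{\mathrm{goal}}$ in $n$ steps of minimum global cost $J$ among all such trajectories, i.e., an optimal solution to Problem~\ref{problem:DiscreteNE}.

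The step I expect to be the main obstacle is justifying the A$^\ast$ \emph{dominance} pruning in the outer loop, where at most one (cheapest) path is retained per tensor node. Standard A$^\ast$ optimality presumes optimal substructure: future feasibility and extensibility must depend only on the current node, not on the history used to reach it. Monotonicity secures prefix-closedness but not this substructure, because whether a robot has a profitable deviation depends on the \emph{entire} fixed trajectories $\pi^{-i}$ of the others; two equally-terminating gNE prefixes with different histories may thus differ in whether they extend to a gNE at the full horizon $n$. The crux is therefore to show that keeping only the cheapest gNE prefix at each (time-stamped) tensor node is sound --- ideally by proving that gNE-extensibility to the goal is determined by the node together with the remaining horizon, or otherwise by enlarging the search state to carry exactly the history on which the deviation tests depend. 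I anticipate this substructure argument, rather than the A$^\ast$ or monotonicity bookkeeping, to be where the real work lies.
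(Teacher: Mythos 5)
Your decomposition is essentially the one the paper uses: the published proof likewise casts \algd as A$^\ast$ over the tensor graph with state set $S=V$, treats \algisne as a further A$^\ast$ instance certifying the gNE predicate, and invokes Lemma~\ref{lem:NashMonotonicity} to argue that rejecting non-gNE prefixes cannot discard any prefix of the optimal gNE trajectory, before concluding that optimality is inherited from A$^\ast$. On that shared portion you are aligned, and your account of the inner search as an exact best-response oracle (single-robot A$^\ast$ under the fixed-length, fixed-endpoint, and time-parameterized collision constraints) is more explicit than the paper's one-sentence treatment of Alg.~\ref{alg:isEquilibriumStar}.

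The obstacle you single out, however, is real, and the paper's proof does not engage with it. Monotonicity is used there only in the direction you also use it: a non-gNE prefix can never grow into a gNE, so prunes triggered by the \algisne filter are safe. It says nothing about the cost-dominance prune at line~\ref{alg:BetterInClosedLine}, where two distinct gNE prefixes meeting at the same joint vertex may carry different histories, and hence---because the deviation test for robot $i$ is taken against the \emph{entire} fixed trajectory $\pi^{-i}$ and because $J^i$ couples the robots along the whole path---may differ both in cost and in whether they extend to a full-horizon gNE. That is precisely the optimal-substructure property on which standard A$^\ast$ dominance rests, and the paper asserts rather than proves it. The same goes for your parenthetical about carrying the step index in the search state: the goal condition, the exactly-$n$-edges requirement, and the time-stamped collision tests are all horizon-dependent, while the pseudocode keys \texttt{OPEN} and $P$ by the joint vertex alone. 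So you have not missed a step that the paper supplies; you have located the step its own argument leaves implicit, and closing it would require either establishing that gNE-extensibility depends only on the (time-indexed) node and remaining horizon, or augmenting the search state as you propose.
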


\begin{proof}
  The A* algorithm returns an optimal solution when running on a search problem $P$ using an admissible heuristic \cite{hart1968formal}. 
    Recall that a search problem is a tuple 
    $P = \langle S, \rm{Succ}, s_{\rm begin}, S_{\rm goal} \rangle$,
    where 
    $S$ is a set of (abstract) states, 
    $\rm{Succ}: S \rightarrow 2^S$ is a succesor function mapping each state to a set of neighboring states, 
    $s_{\rm begin} \in S$ is the initial state
    and
    $S_{\rm goal} \subset S$ a set of goal states.

    We start by formalizing our problem as a search problem and then continue to show that Alg.~\ref{alg:DSP}, which can be seen as an adaptation of A*, inherits its optimality.
    Specifically, consider the search problem defined over the set of states $S=V$ (i.e., each abstract state represents a joint vertex, which in itself represents a joint robot state).
    The successor function is defined such that for vertices $v,v'\in V$, we have that
    $v' \in \rm{Succ}(v)$ if and only if
    $(v,v') \in G$.
    %
    Finally, the start and goal states are $x_0$ and states satisfying  Problem 2.

    Alg.~\ref{alg:DSP} can be seen as an adaptation of A* where
    nodes are only added to the \texttt{OPEN} list if they satisfy equilibrium conditions (line~\ref{alg:LowLevelSearchCall}) and using an admissible  heuristic $H$. 
    Pruning search nodes  in this search problem does not hinder optimality of the search due to the  Nash monotonicity property (Lemma~\ref{lem:NashMonotonicity}). That is, search nodes that are pruned cannot contribute to a longer trajectory that is a NE. Thus, optimality follows from the optimality of A*.
    As noted, Alg.~\ref{alg:isEquilibriumStar} is too an adaptation of A*, and thus when used with an admissable heuristic, returns an optimal, collision-free path of upto length $n$.
    \qed
\end{proof}

\niceparagraph{Time complexity.} The branching factor of node expansion in \algd grows exponentially with the number of robots. Even so, our approach efficiently copes with realistic problem settings in a few seconds (see Sec.~\ref{sec:Experiments}), due to the aggressive pruning of \algisne, which substantially reduces the search space, as compared to the non-game-theoretic and centralized multi-robot motion planning setting~\cite{WagnerC15}. We discuss additional speed-up techniques in Sec.~\ref{sec:conclusion}.

\section{Experiments and Results}\label{sec:Experiments}
We provide an experimental evaluation of our game-theoretic approach in driving and racing scenarios. We begin with implementation details and experiment setup, and then proceed to a qualitative evaluation showcasing the behaviors of the approach. 
Next, we evaluate the computational aspects of our approach and conclude with a comparison with LaValle-Hutchinson~\cite{lavalle2002optimal}. An additional experiment, which studies the effect of NE mismatch, is found in App.~\ref{app:mismatch}.

\subsection{Implementation details and problem setting}
The single-robot graphs $G_i$ are built offline in Python using CasADi \texttt{Opti} \cite{Andersson2019} for BVP solving (see more information below). \algd is implemented in C++ using the LEMON \cite{dezsHo2011lemon} graph library. Simulations were run on a 64-bit Win11 laptop with an Intel Core i9-14900HX CPU (2.20 GHz) and 32 GB RAM.

For the robot model, we consider the second-order bicycle model \cite{lavalle2006planning}. 
The robot has the states, controls, and dynamics, of the form 
\[
{\small \mathbf{x}=\begin{bmatrix}x \\  y\\\theta\\v\\ \delta \end{bmatrix},\quad
\mathbf{u}=\begin{bmatrix} a\\ \omega \end{bmatrix},\quad
\dot{\mathbf{x}}=
\begin{bmatrix}
v\cos\theta\\
v\sin\theta\\
\tfrac{v}{L}\tan\delta\\
a\\
\omega
\end{bmatrix},}
\]
respectively. Here, $x$, $y$ and $\theta$ are the 2-D positions and heading of the car, $v$, and $\delta$ are its speed and steering angle, and $a$ and $\omega$ control their rate of change. $L$ is the distance between front and rear axles (see visualization in App.~\ref{app:figures}, Fig.~\ref{fig:secondordermodel}).  

\niceparagraph{Kinodynamic graphs.}
For the single-robot graphs $G^i$, we consider two constructions. 
The \textit{grid graph} (Fig.~\ref{fig:gridgraph}), which is task-agnostic, samples a spatial lattice in $(x,y)$ (with $\Delta x{=}\Delta y{=}1\,\mathrm{m}$) and expands said lattice with headings, speeds, and steering angles on discrete sets $(\theta,v,\delta)$. 
From each node, we attempt directed connections to \emph{nearby} nodes within a $k$-hop neighborhood of the lattice. 
The \textit{track graph} leverages the environment's topology and samples along a track centerline with lateral offsets (App.~\ref{app:figures}, Fig.~\ref{fig:trackgraph}). Unless otherwise stated, robots share identical dynamics and graph $G^i$.


Graph size is determined by the sampling resolutions $(\Delta x,\Delta y,\Delta\theta,\Delta v,\Delta\delta)$ and the connection parameter $k$. 
We choose these to meet a computational budget while preserving sufficient fidelity for the maneuvers of interest. 
Concretely, we set spatial and angular resolutions and $k$ to reach a desired ballpark of nodes/edges, then prune infeasible nodes/edges via kinodynamic feasibility and obstacle inflation. 
The resulting graph sizes are reported in Table~\ref{tab:map_solve_time_full_compact}.


Edge generation is performed in the following manner.
For each candidate source\(\rightarrow\)target pair of nodes, we solve a fixed-duration two-point boundary-value problem (BVP) over \(\Delta t=1\,\mathrm{s}\)
with hard input limits \(|a|\le 5\,\mathrm{m/s^2}\), \(|\omega|\le 2\,\mathrm{rad/s}\).
If the BVP is feasible, we add the corresponding directed edge and store its continuous trajectory.
Edges are discarded if their continuous trajectory intersects obstacles inflated by \(r_{\mathrm{coll}}\).
Unless otherwise specified, each edge has unit cost.

\niceparagraph{Search heuristics.}
As the inner search \algisne may be called on any $\pi_{\xcand}$, we require an admissable heuristic to every node $x$, and not just to $\Xgoal$, thus for each robot $i$ we precompute all-pairs shortest-path (APSP) distances on $G^i$ and use
$H(x)=\sum_i H^i(x^i)$  (or the weighted equivalent when $J(\cdot)$ is weighted, see below) where $H^i$ is the single-robot shortest distance from $x^i$ to $\mathcal{X}^i_{\mathrm{goal}}$.
Because single‑robot distances satisfy the triangle inequality and joint costs add across robots, $H$ is consistent (and hence admissible) on the tensor‑product graph $G$. Moreover, during inner A* searches for robot $i$ on $G^i$ with dynamic obstacles induced by a fixed plan $\pi^{-i}$, $H^i$ remains a valid lower bound on $i$'s cost‑to‑go: the added constraints only restrict feasible motion and cannot reduce the optimal single‑robot cost. 

\subsection{Qualitative analysis} \label{subsec:Experiments}
We present a set of multi-agent scenarios mimicking interactions in autonomous driving and racing and discuss the results obtained by our algorithm. These experiments showcase the generality of our approach, and demonstrate the diversity of equilibria available, which are hard to capture with previous optimization-based methods. 


Across tasks we vary two interpretable knobs that shape the resulting NE: The global objective aggregates the individual costs $J(\pi_n)=\sum_{i=1}^m \alpha_i\,J^i(\pi^i_n,\pi^{-i}_n)$ to encode priorities via the weights $\alpha_i \in [0,1]$. Additionally, each $J^i$ includes a state-based penalty that encourages separation from other robots: 
{\small
    \[
    c^i_{\mathrm{prox}}\!\left(\mathbf{x}^i,\mathbf{x}^{-i}\right)
    \;=\;
    \frac{\lambda_{\mathrm{prox}}}{\max\!\left(\min_{j\in -i}\,\bigl\|\textup{proj}(\mathbf{x}^i)-\textup{proj}(\mathbf{x}^j)\bigr\|_2,\ \varepsilon\right)},
    \]
}
    where $\lambda_{\mathrm{prox}}\!\ge 0$, $\textup{proj}:\mathcal{X}\to\mathbb{R}^2$ returns the planar position $(x,y)$ of a state and $\varepsilon{=}10^{-3}$ prevents singularities (yielding a bounded, well-behaved, Lipschitz stage cost). Larger $\lambda_{\mathrm{prox}}$ promotes greater standoff. In our experiments, each $J^i$ is the sum over $n$ edges of $\pi^i_n$ of unit edge cost plus the proximity term for each of the $n+1$ nodes along the way:
{\small    
\[
J^i(\pi^i_n,\pi^{-i}_n) = n +
\sum_{\mathbf{x}^i, \mathbf{x}^{-i}\in(\pi^i_n,\pi^{-i}_n)}\!\Bigl(c^i_{\mathrm{prox}}(\mathbf{x}^i,\mathbf{x}^{-i})\Bigr).
\]
}


Next, we discuss the scenarios and the solution obtained by \algd. An additional experiment that showcases the effect of $\lambda_{\mathrm{prox}}$ on following distance in a semi-collaborative setting, can be found in App.~\ref{app:figures}, Fig.~\ref{fig:racedistance}.

\begin{wrapfigure}[9]{r}{0.3\columnwidth}\vspace{-1.6\baselineskip}  
    \centering
    \begin{overpic}[
        width=\linewidth,
        trim=140pt 100pt 2pt 50pt,
        clip
    ]{./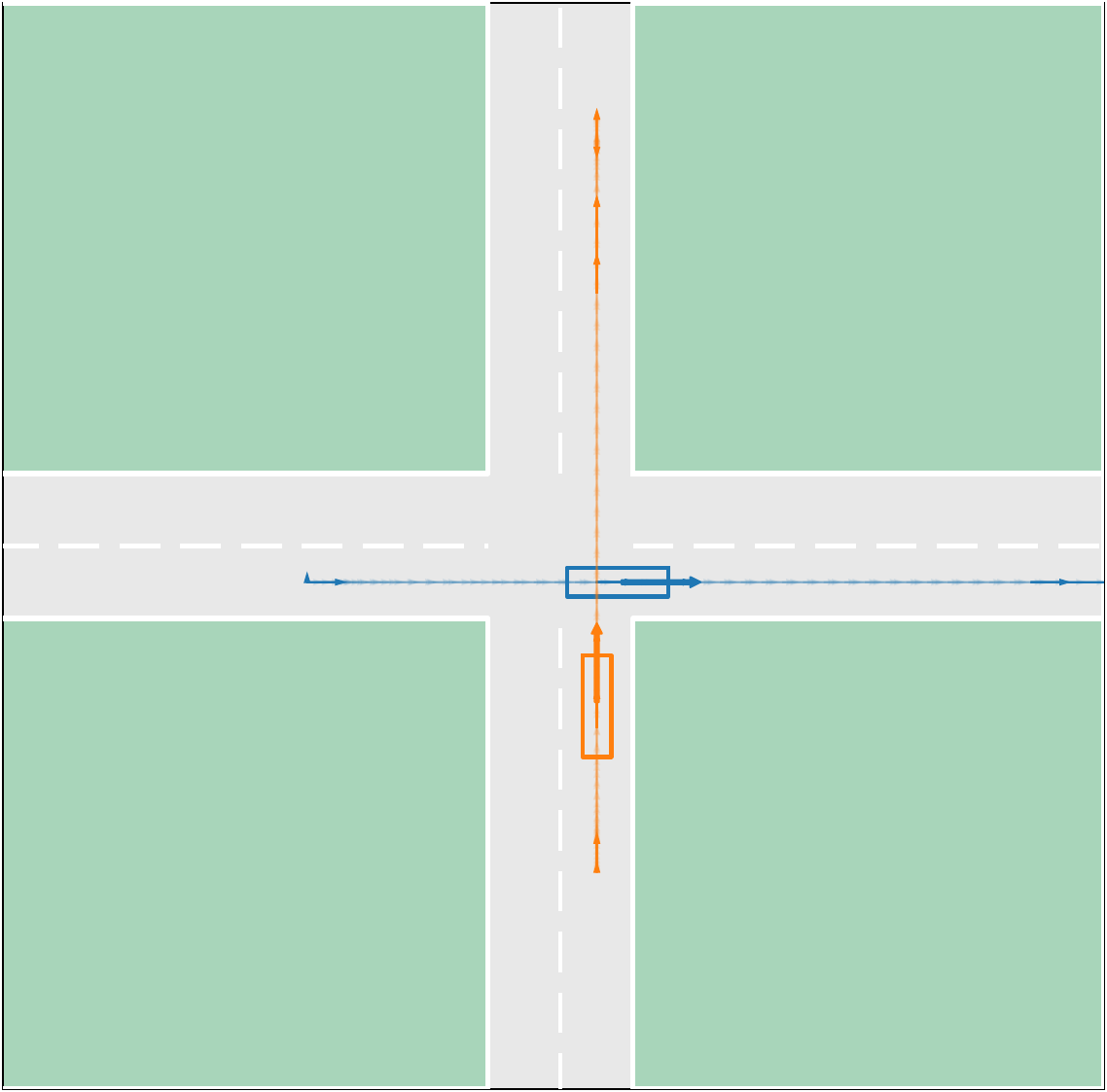}
    
        \put(65,72){%
            \colorbox{white}{\scriptsize $\alpha \geq 0.5$}%
        }
    \end{overpic}
    \vspace{-100pt}
\end{wrapfigure}
\niceparagraph{Four-way intersection.} We consider a single-lane four-way intersection with perpendicular approaches by two robots, objective $J=\alpha J^1+(1-\alpha)J^2$, $\alpha\in[0,1]$, $\lambda_{\mathrm{prox}}=0$.
For $\alpha \geq 0.5$ the NE selected by our planner has Robot~1 (blue) crossing from left to right while Robot~2 (orange) yields and waits for the intersection to clear before proceeding from bottom to top. For $\alpha < 0.5$, the NE is symmetric: Robot~2 crosses first and Robot~1 yields. This result illustrates the ability of our NE planner to select a safe NE in an explainable manner (via tuning of the objective function $J$), without requiring clipping of the dynamics or costs, or initial guesses (as commonly considered in previous work). 


\niceparagraph{Racetrack overtake.} We consider a racetrack segment to showcase priority-driven overtaking decisions between two robots. This setting  probes how a scalar priority $\alpha$ biases raceline choice and pass timing within realistic behavioral dynamics trajectories. 
We set $J=\alpha J^1+(1-\alpha)J^2$, $\alpha\in[0,1]$, $\lambda_{\mathrm{prox}}=0$. Fig.~\ref{fig:raceovertake} shows that the tilt of robot priority determines which robot targets the inside line at the final turn, and ultimately wins the race. Note that even at high $\alpha$, Robot 1 is blocked at the early corner---illustrating that even in the best NE for Robot 1, overtaking early is not feasible due to kinodynamic/safety constraints. Thus, via a  tuned NE our algorithm can robustly express the lower bound on the cost of a given robot's solution (within the space of all NEs). 

\begin{figure}[t]
  \centering
  \begin{overpic}[width=0.8\textwidth]{./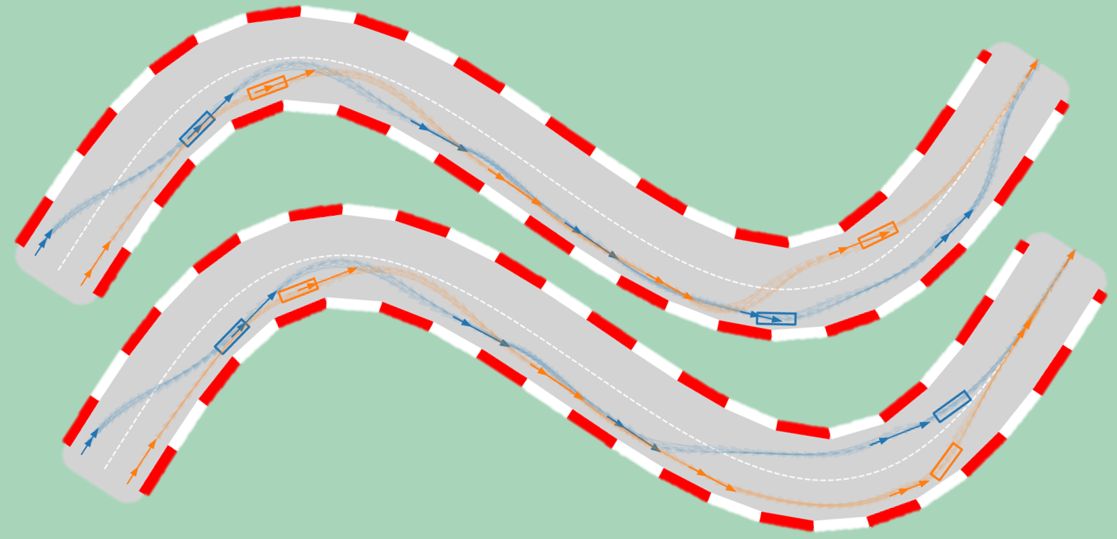}
    \put(63,37){\colorbox{white}{\scriptsize $\alpha < 0.5$}}
    \put(26,11){\colorbox{white}{\scriptsize $\alpha \geq 0.5$}}
  \end{overpic}
  \caption{Racetrack overtake. Prioritized robot takes inside lane at final turn and wins the race. Robot 1 (blue) always blocked at first turn by Robot 2 (orange) despite priority.}
  \label{fig:raceovertake} 
\end{figure}


\niceparagraph{Multi-lane highway merge.} We consider a three-lane highway merge with three robots (Fig.~\ref{fig:highwaymerge}) to  explore homotopy classes of merge options as $\alpha$ and $\lambda_{\mathrm{prox}}$ trade priority against risk.  
We set  $J=(1-\alpha)J^1+\alpha(J^2+J^3)$, $\alpha\in[0,1]$, and $\lambda_{\mathrm{prox}}\in[0,2.5]$. 
As priority for Robot 1 decreases and $\lambda_{\mathrm{prox}}$ increases, the NE transitions through distinct maneuvers. 
This shows the multiplicity of homotopy classes that often (if not always) exist in complex interactions, that are often impossible to capture with local NE solvers. 

\niceparagraph{Opposing-lane overtake.}
We consider a two-lane bidirectional road with oncoming traffic and three robots to simulate risk-aware passing with heterogeneous capabilities ($G^1$ has $v\in[0,10],\ k=5$, $G^2, G^3$ have $v\in[0,4],\ k=3$). We also test how $\alpha$ and $\lambda_{\mathrm{prox}}$ modulate dangerous overtake decisions and select among oncoming-lane vs. in-lane homotopy classes. We set  $J=(1-\alpha)J^1+\alpha(J^2+J^3)$, $\alpha\in[0,1]$,  and $\lambda_{\mathrm{prox}}\in[0,1]$. In Fig.~\ref{fig:roadovertake} we see that with high priority and low $\lambda_{\mathrm{prox}}$, Robot 1 overtakes at the first viable opportunity. Increasing $\lambda_{\mathrm{prox}}$ and lowering priority, however, delays the pass until oncoming traffic clears, and beyond a certain threshold, the NE forgoes the overtake entirely. This  shows behavioral changes across the different homotopy classes. 





\begin{figure}[b] 
  \centering
  \captionsetup{aboveskip=0pt, belowskip=0pt}

  \begin{subfigure}{\columnwidth}
    \centering
    \begin{overpic}[%
        width=0.8\linewidth,
        trim=3pt 5pt 3pt 5pt, 
        clip
    ]{./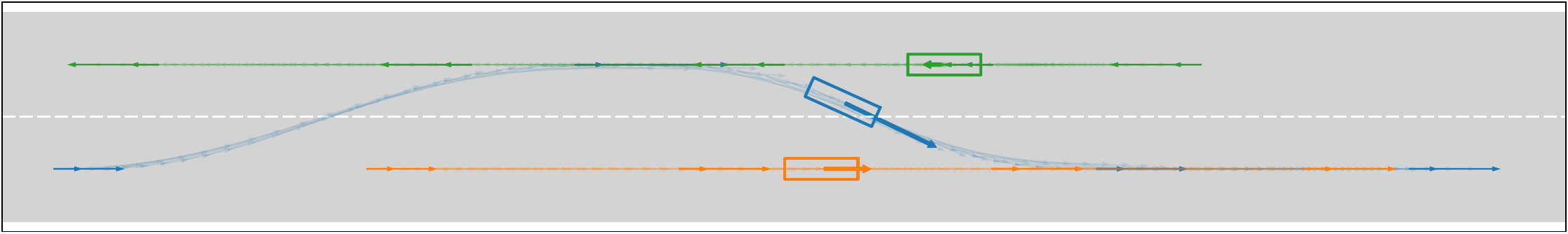}
      \put(77,12){\fcolorbox{gray}{white}{\scriptsize (a) $\alpha = 0.00,\ \lambda_{\mathrm{prox}} = 0.00$}}
    \end{overpic}
    \phantomsubcaption\label{fig:highwayovertake000}
  \end{subfigure}\vspace{-0.7em}

  \begin{subfigure}{\columnwidth}
    \centering
    \begin{overpic}[%
        width=0.8\linewidth,
        trim=3pt 5pt 3pt 5pt,
        clip
    ]{./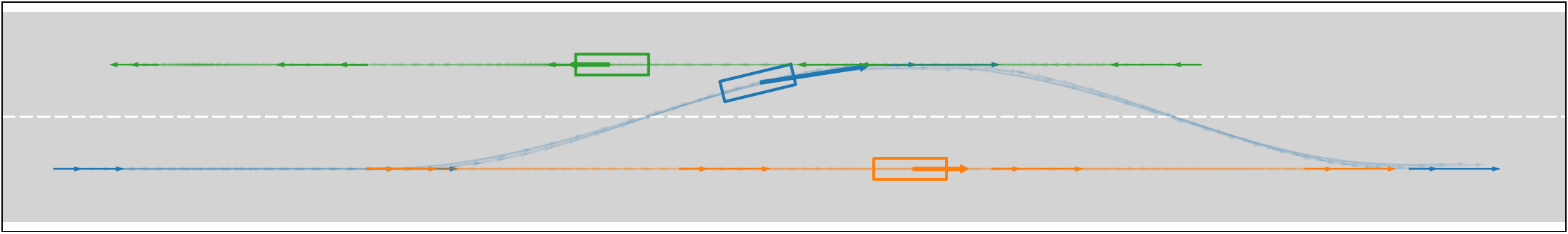}
      \put(77,12){\fcolorbox{gray}{white}{\scriptsize (b) $\alpha = 0.25,\ \lambda_{\mathrm{prox}} = 0.25$}}
    \end{overpic}
    \phantomsubcaption\label{fig:highwayovertake025}
  \end{subfigure}\vspace{-0.7em}

  \begin{subfigure}{\columnwidth}
    \centering
    \begin{overpic}[%
        width=0.8\linewidth,
        trim=3pt 5pt 3pt 5pt,
        clip
    ]{./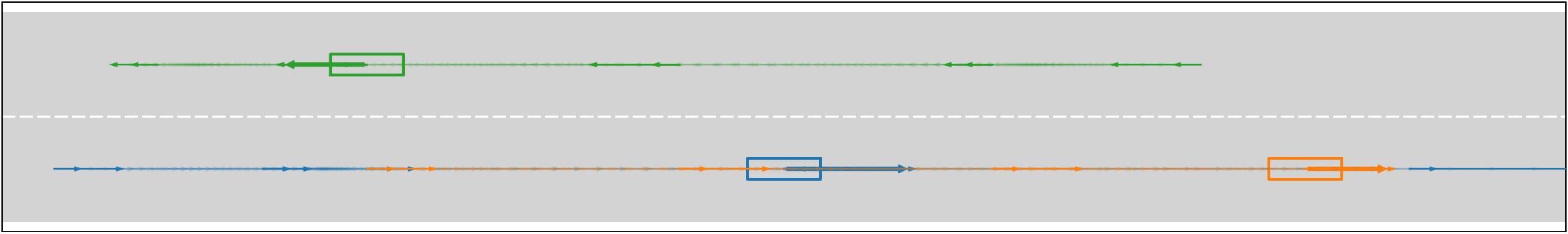}
      \put(77,12){\fcolorbox{gray}{white}{\scriptsize (c) $\alpha \ge 0.50,\ \lambda_{\mathrm{prox}} \ge 0.50$}}
    \end{overpic}
    \phantomsubcaption\label{fig:highwayovertake05+}
  \end{subfigure}

  \caption{Opposing-lane overtake. Robot 1 (blue) tries to overtake Robot 2 (orange) via Robot 3's (green) opposing lane. Different NE overtake strategies chosen as priority and proximity penalty are modulated.}
  \label{fig:roadovertake}
\end{figure}

\subsection{Running time evalution}

\begin{table}[h!]
\centering
\small
\setlength{\tabcolsep}{2pt}
\sisetup{
  detect-weight = true,
  group-minimum-digits = 4,
  group-separator = {,},
  table-number-alignment = center,
  table-text-alignment = center,
}
\caption{Running time evaluation for \algd. Each row corresponds to a map, number of robots ($m$). The rightmost block lists time(s) in seconds with the associated goal distance $D$ in parentheses. The \emph{Generic Free} map is an obstacle-free workspace used to stress test expansion (maximal branching). \emph{Generic Obstacles} uses the same spatial dimensions but has random obstacles, pruning many nodes/edges (lower branching).} 
\begin{tabular}{l
                S[table-format=4]
                S[table-format=6]
                S[table-format=1]
                *{6}{c}}
\toprule
\textbf{Map} & $\mathbf{|V|}$ & $\mathbf{|E|}$ & $\mathbf{m}$ &
\multicolumn{6}{c}{$\mathbf{\textbf{Solve Times } (sec) \textbf{ by Distance } D}$} \\
\midrule
Generic      & 7560 & 51733 & 2 & 3.31(7) & 0.046(9) & 2.11(11) & 1.57(13) & 0.063(15) & 4.73(17) \\
Free                                   &  &  & 3 & 396.1(7) & 3.7(9) & 59.1(11) & 16.2(13) & 9(15) & 178.5(17) \\
\midrule
Generic  & 6300 & 25818 & 2 & 0.051(7) & 0.039(9) & 1.28(11) & 0.666(13) & 0.797(15) & 0.136(17) \\
Obstacles                                   &  &  & 3 & 15.1(7) & 34(9) & 3.79(11) & 12.2(13) & 4.58(15) & 12(17) \\
\midrule
Two-Lane  & 5138 & 39618 & 2 & 0.034(7) & 1.02(9) & 0.984(11) & 0.501(13) & 4.59(15) & 2.77(17) \\
Road       &   &  & 3 & 10.7(7) &  &  &  &  &  \\
\midrule
Track            & 693  & 4359  & 2 & 0.097(7) & 0.24(9) & 0.435(11) & 0.865(13) & 0.247(15) & 1.55(17) \\
                                 &   &   & 3 & 60.3(7) & 2.22(9) & 32.1(11) & 55.2(13) & 525.9(15) &  \\
\midrule
Crossroads       & 1428 & 550   & 2 & 0.005(2) & 0.006(4) & 0.004(6) & 0.007(7) & 0.007(8) &  \\
                                 &  &    & 3 & 0.046(2) & 0.05(4) & 0.045(6) & 0.037(7) & 0.041(8) &  \\
                                 &  &    & 4 & 0.162(2) & 0.111(4) & 0.154(6) & 1.53(7) & 0.283(8) &  \\
                                 &  &    & 5 & 0.724(2) & 1(4) & 12.2(6) & 4.93(7) & 12.1(8) &  \\
\bottomrule
\end{tabular}
\label{tab:map_solve_time_full_compact}
\end{table}

Table~\ref{tab:map_solve_time_full_compact} reports wall-clock solve times (not including preprocess time for graph generation) for \algd\ across several maps, goal distances and number of robots. Goal distance $D$ is a proxy for planning horizon $T$ as a time cap $T$ restricts search to the subgraph reachable within $\lfloor T/\Delta t \rfloor$ edges from the start; by instead varying the goal distance $D$, we probe increasingly far targets without imposing this cap.

We note that increasing $D$ does not substantially increase running time---a considerable strength of our approach. Unlike optimization-based approaches---where running time typically scales with horizon due to the growing number of decision variables---our graph-based formulation keeps per-step local problems fixed-size and searches over a discrete action set, so increasing $D$ does not balloon the optimizer. By contrast, running time generally increases with the number of robots. The joint neighbor set grows combinatorially, interaction geometry becomes richer (more constraints and blocking situations), and the equilibrium check \algisne\ triggers more single-robot A* calls (a principal computational bottleneck) as $m$ grows. An exception is the \emph{Crossroads} map: its branching factor is small and interactions are simple, so adding robots does not markedly increase search effort.

Across all settings, \emph{instance difficulty} (map layout and interaction geometry) dominates running time. 
Tight merges, opposing-lane passes, and narrow corridors can make some two-robot cases slower---or even infeasible at a given discrete horizon---than multi–robot cases on simpler maps with few viable options. 
This can also explain why \emph{Generic Obstacles} is faster on average than \emph{Generic Free}: moderate obstacle density reduces branching without making the graph too sparse. See App.~\ref{app:figures}, Fig.~\ref{fig:genericmaps} for visualization.

\subsection{Comparison to LaValle-Hutchinson method}
As noted in Sec.~\ref{sec:Intro}, the LaValle-Hutchinson method~\cite{lavalle2002optimal} also leverages graph search techniques and yields equilibrium guarantees. Here, we study the computational burden of generating, pairwise verifying, and explicitly maintaining all non-dominated candidate trajectories (up to one representative per equivalence class) within that approach. Unsurprisingly, in maps with low branching, and short horizons (e.g., Crossroads, Track), and two robots, the two algorithms are almost comparable: our method achieves a modest speedup of roughly $1.5$-$2\times$. For dense graphs with high branching factors and long horizons (e.g., Two-Lane-Road, Generic Free), however, the speedup reaches $2$-$3$ \emph{orders of magnitude}.

The main reason is that the LaValle-Hutchinson algorithm must carry \emph{every} non-dominated partial trajectory (equivalency class) at each joint state. As the branching factor and horizon grow, the number of distinct cost vectors at a single node can grow combinatorially, so each additional layer multiplies both the number of wavefront paths and the cost of dominance checks. In contrast, our planner explores the composite state space with a best-first strategy guided by single-robot distance heuristics and aggressive dominance, so it (a) explores more promising areas first (only) and (b) effectively prunes entire sub-trees. Thus, its complexity scales primarily with the number of joint states actually visited, rather than with the total number of distinct joint trajectories, which makes high-branching and large-horizon regimes much easier to handle in practice.

\section{Conclusion and future work}\label{sec:conclusion} We presented \algdfull, a scalable framework for computing global Nash equilibria for game-theoretic motion planning with general dynamics. \algd enjoys strong theoretical guarantees, while also allowing explicit selection among multiple equilibria via a user-specified global objective. Across driving and racing scenarios, GTNS finds collision-free, dynamically feasible interactions within seconds.

In the future, we plan to address a key limitation of the approach, which is the reliance on high-fidelity kinodynamic roadmaps, whose construction requires many calls to a BVP solver; although peformed offline, and can be reused in different scenarios, the compute cost and memory footprint grow with state discretization and connection density, and are a key practical bottleneck. One possible direction could be substituting such graphs with kinodynamic trees that are computed on-the-fly by sampling-based planners~\cite{li2016asymptotically,HauserZ16,FuSSA23}. This modification should be done with care as to ensure that the game-theoretic guarantees hold. 
We also plan to explore improving the algorithm's scalability with respect to the number of robots. Although the tensor graph is explored implicitly, joint branching still grows with the number of robots; moreover, each equilibrium check triggers single-robot searches that can dominate running time in tightly coupled scenes. We believe that improved heuristics and pruning conditions, as well as reusing information between subsequent calls of the NE check, and integration with learning-based exploration~\cite{hassidof2025train}, could significantly speed up the algorithm, as well as parallelizing the single-robot searches.

In the longer run, we plan to extend the approach to the closed-loop setting and study principled tie-breaking and fairness- and risk-aware objectives, as well as coordination mechanisms when several NEs exist.

\bibliographystyle{splncs04}
\bibliography{references}

\appendix
\renewcommand{\theHsection}{\Alph{section}}

\section{Implementation Speedups}\label{app:speedup}
As mentioned in Sec.~\ref{sec:Algorithm}, the pseudocode is presented so that the fundamentals of \algd and \algisne are as clear as possible. In practice, we obtain significant efficiency improvements by leveraging several implementation techniques. 

The most consequential (inspired by works such as \cite{dellin2016unifying, mandalika2019generalized, cohen2015planning}) is \emph{lazy validation} of collision and equilibrium checks: every neighbour of $x$ visited in line~\ref{alg:ExtendNodeLine} is pushed to the \texttt{OPEN} list without immediate validation; only when a neighbour is popped from \texttt{OPEN} (i.e., it currently has the lowest cost) do we verify that it is in fact a valid node. This defers the most expensive computations and spends resources only on promising candidates.

The implementation also caches edge-collision results (Boolean flags indicating whether an edge $e \in E$ is in collision) to allow fast lookup before invoking an expensive check, precomputes heuristics, and performs early feasibility checks for the query. Additional optimizations include careful memory management and the use of pointers to stable storage. Finally, we note that \algisne is trivially parallelizable, as the per-robot A* runs with respect to fixed trajectories of all other robots are independent.

\section{Additional Experiments \& Figures}\label{app:figures}
This appendix includes supplementary figures and experiments that exceed the space limitations of the main manuscript. Specifically, we examine the effects of NE mismatch and its resulting adverse outcomes. Additionally, we present an experiment that isolates the influence of $\lambda_{\mathrm{prox}}$ in a semi-collaborative setting. Finally, a summary of the notation used in the paper is provided.

\subsection{Effect of NE mismatch} \label{app:mismatch}
Our approach allows for computing various NE solutions to the same problem.  The experimental results indicate that different equilibria can result in vastly distinct performance of the multi-robot system. So far, we have assumed that all the agents execute the same equilibrium, which may not be guaranteed in practice. To motivate further study in coordinating the equilibrium between agents, we evaluate the consequences of equilibrium mismatch. 

In the majority of tested cases, when agents executed in an open-loop fashion trajectories belonging to different NEs,  collisions between robots were encountered. This underscores the need for a coordinated, team-wide equilibrium in behavior-aware motion planning. To address the concern that open-loop per-agent NEs are ``obviously brittle,'' we also evaluated an MPC variant: at every step each agent re-solved \emph{its own} NE instance, where $J=J_i$, i.e., the global cost strictly optimizes the solution of the specific robot executing its trajectory from this NE.  At every step, the robot resolved the problem towards the goal using this objective while observing the updated locations of the other robots, executed its step, and repeated the process. Such an approach is hoped to be more resilient as it could provide more opportunities for the robots to avoid colliding with each other. Unfortunately, even under this closed-loop regime, the lack of shared equilibrium selection caused systematic issues and in almost all scenarios (e.g., those in Figs.~\ref{fig:roadovertake} and \ref{fig:highwaymerge}) a collision. In rare ``best-case" scenarios,  when interaction geometry was simple, (e.g., on the track in Fig.~\ref{fig:racedistance} with large start distance) mismatched individual NEs yielded similar solutions to those returned by the global planner. 

We believe that such a NE mismatch could explain the cause of motor accidents between human drivers where a driver ``fails to judge other driver’s path or speed'', which contributes to at least \%13 of all accidents in the US, according to a 2013 report~\cite{Najm2013PreCrashLightVehicle}, and \%11 of fatal accidents in the UK, based on a 2022 report~\cite{DfT2024RSFInitialAnalysis }. This suggests that algorithms such as \algd could be used by transportation engineers to assess the impact, in terms of safety, of introducing a new road feature (e.g., an intersection or a traffic island).

\begin{figure}
  \centering
  \begin{overpic}[width=\textwidth]{./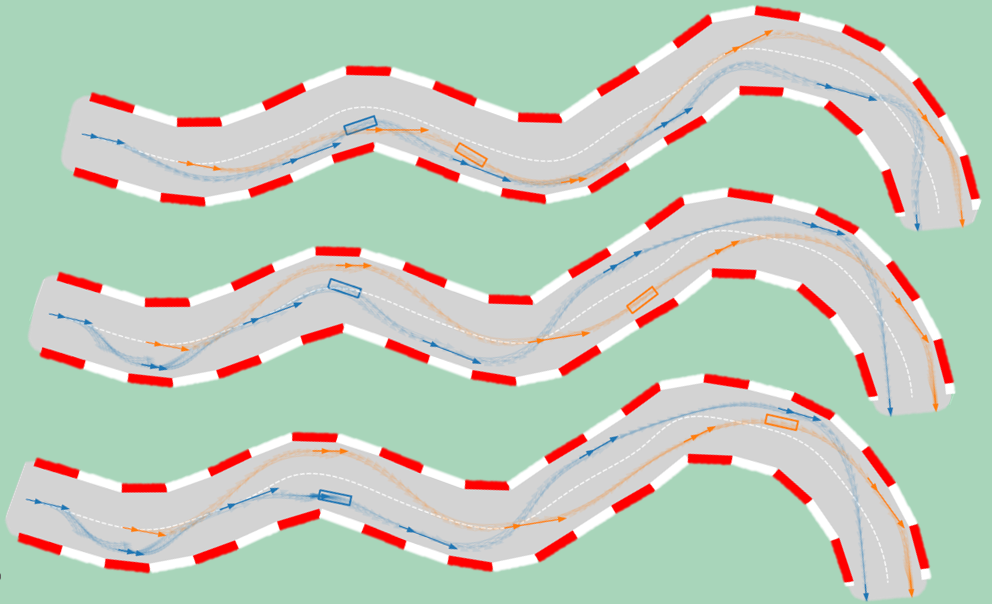}
    \put(2,53){\colorbox{white}{\scriptsize $\lambda_{\mathrm{prox}}=0.00$}}
    \put(2,35){\colorbox{white}{\scriptsize $\lambda_{\mathrm{prox}}=0.25$}}
    \put(2,16){\colorbox{white}{\scriptsize $\lambda_{\mathrm{prox}}\geq0.50$}}
  \end{overpic}
  \caption{Following distance. Semi-collaborative setting, such as two cars on the same team, with social welfare cost $J=\sum_i J^i$ and $\lambda_{\mathrm{prox}}\in[0,1]$. The cars are attempting to cross the finish line, with the lowest possible summed cost. The addition of a proximity penalty to the cost, affects the following distance that Robot 1 (blue) maintains from Robot 2 (orange): A small $\lambda_{\mathrm{prox}}$ value yields close drafting, and larger values increase the time-gap as Robot 1 deliberately backs off from Robot 2, visible at the first corner.}
  \label{fig:racedistance}
\end{figure}
\begin{figure}[h!]
  \centering
  \captionsetup{aboveskip=2pt, belowskip=0pt}
  \captionsetup[subfigure]{skip=2pt, belowskip=2pt} 

  \begin{subfigure}[t]{\textwidth}
    \centering
    \includegraphics[width=\textwidth]{./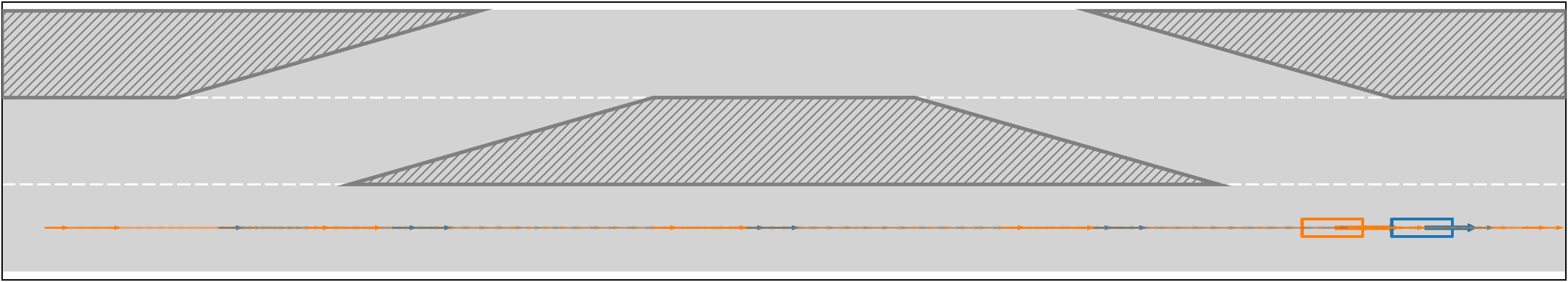}
    \subcaption{Local NE Solver}
    \label{fig:local}
  \end{subfigure}


  \begin{subfigure}[t]{\textwidth}
    \centering
    \includegraphics[width=\textwidth]{./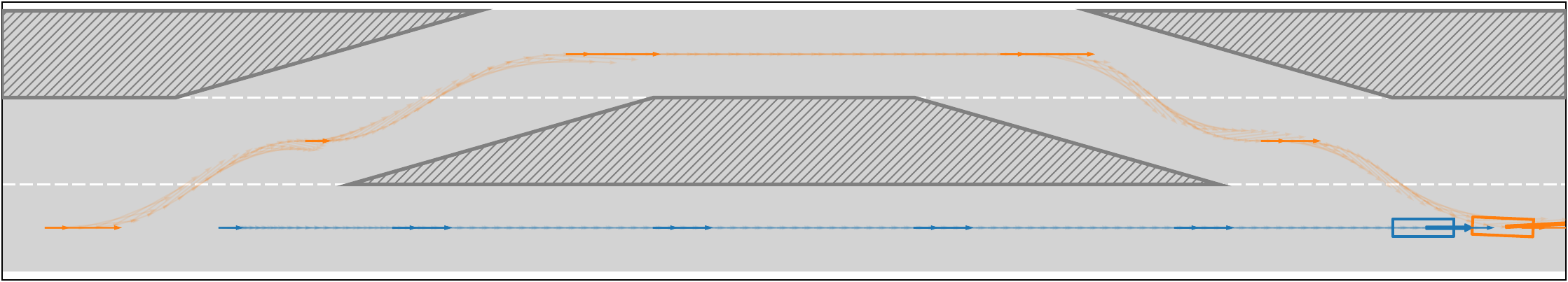}
    \subcaption{Global NE Solver}
    \label{fig:global}
  \end{subfigure}

  \caption{Local vs.\ global Nash equilibrium (NE) in a narrow-overtake scenario, where Robot 1 (blue) is attempting to overtake Robot 2 (orange).
    (a) A local, initialization-dependent solver converges to a non-overtake NE within the homotopy class implied by its seed. 
    (b) Our global NE method searches across homotopy classes and discovers here the overtaking NE.}
  \label{fig:localvsglobal}
\end{figure}

\begin{figure}[h!]
  \centering
  \includegraphics[width=0.98\columnwidth]{./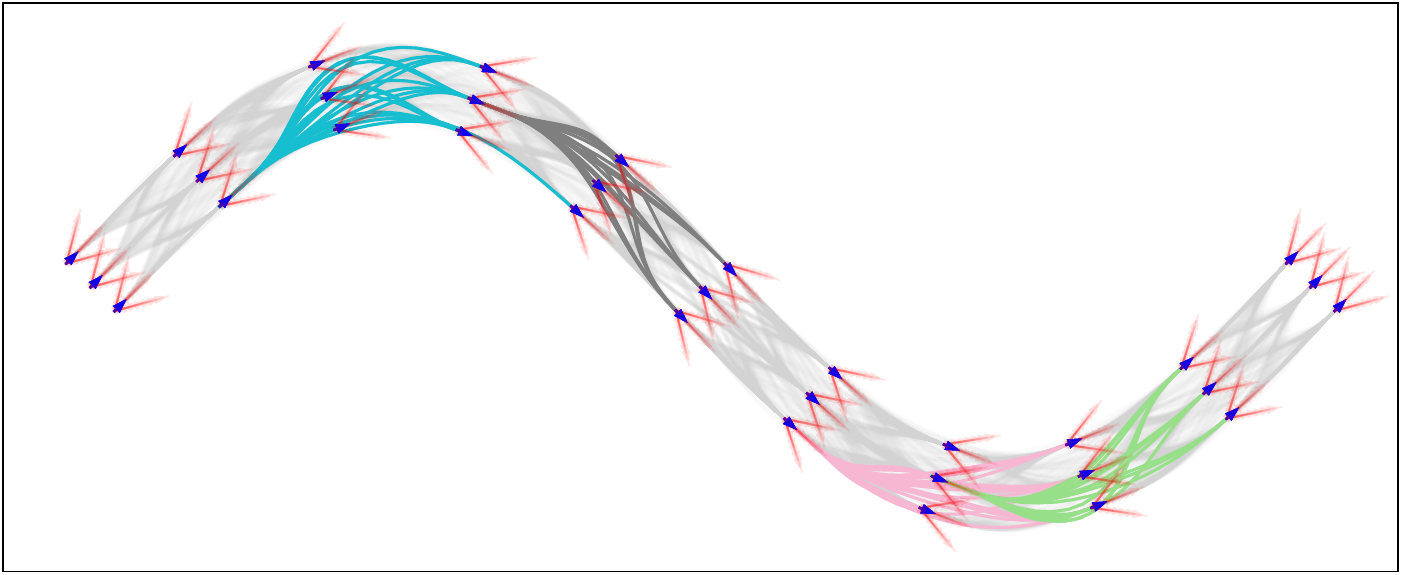}
  \caption{Kinodynamic \emph{track graph} built from racetrack centerline with lateral offsets. At each longitudinal sample we define a wayline (normal to the centerline), set the heading from the centerline tangent, and discretize $(v,\delta)$. From each wayline $\ell$ we attempt connections from all nodes on $\ell$ to all nodes on any wayline $\ell'$ with $|\ell'-\ell|\le k$. Each discrete state $(x,y,\theta,v,\delta)$ is drawn with two arrows anchored at $(x,y)$: a blue heading arrow pointing in direction $\theta$ (fixed short length), and a red steering arrow pointing in direction $\theta+\delta$ with length $\propto v$. Edge styling as in grid-graph.}
  \label{fig:trackgraph}
\end{figure}
\begin{figure}[h!]
  \centering
  \includegraphics[width=0.7\columnwidth]{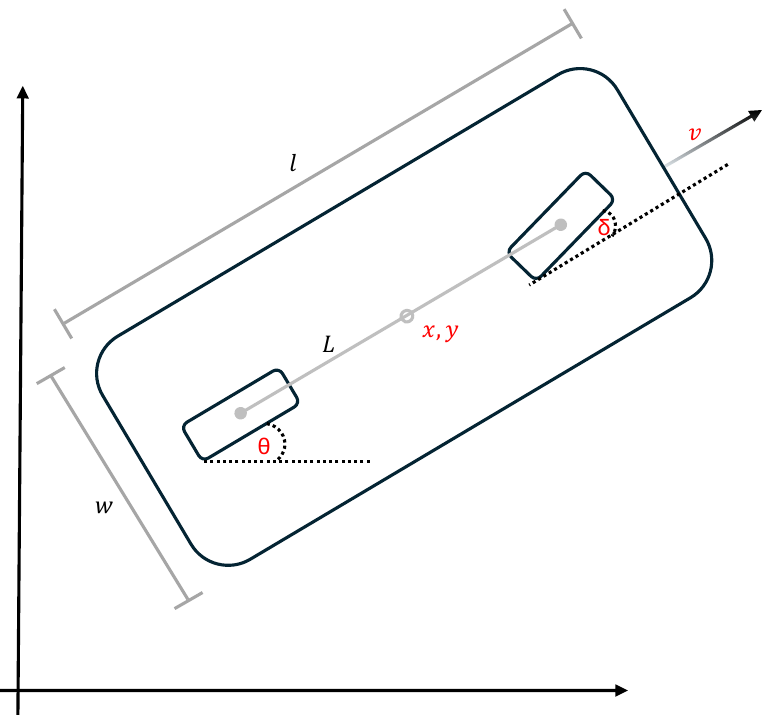}
  \caption{Second-order bicycle model for car dynamics. Car dimensions $l, w, L$ in black; State variables $(x,y,\theta,v,\delta)$ in red.
  In our experiments, we used a scale-model car of length $l=0.70\,\mathrm{m}$, width $w=0.20\,\mathrm{m}$, and wheelbase $L=0.50\,\mathrm{m}$.}
  \label{fig:secondordermodel}
\end{figure}
\begin{figure}[h!]
  \centering
  \begin{subfigure}[t]{\textwidth}
    \centering
    \includegraphics[width=\linewidth]{./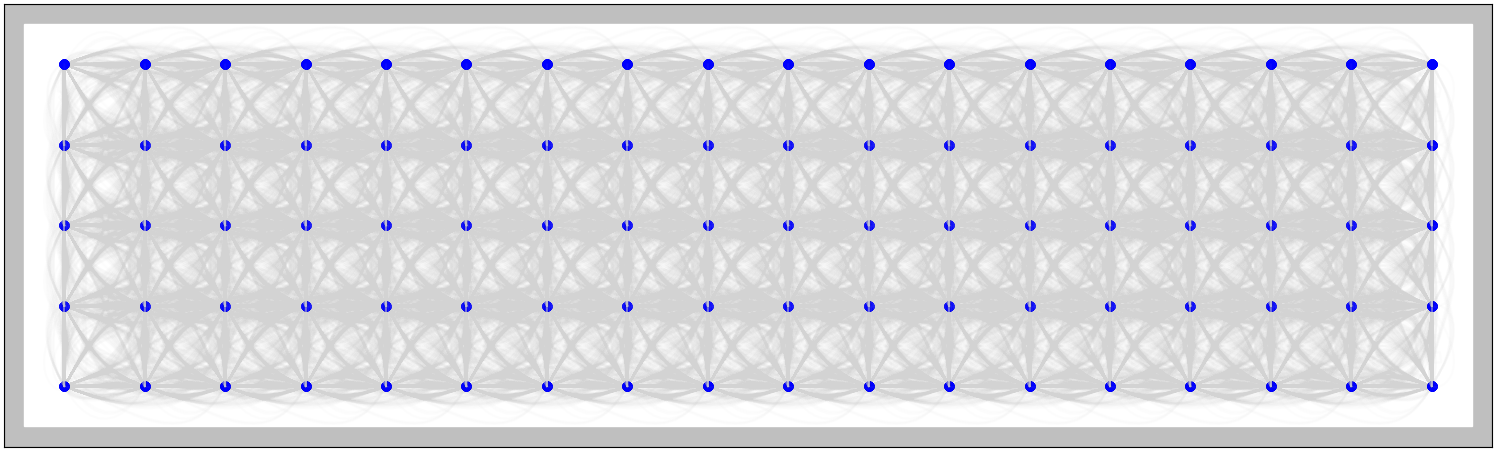}
  \end{subfigure}

  \vspace{-0.1em}

  \begin{subfigure}[t]{\textwidth}
    \centering
    \includegraphics[width=\linewidth]{./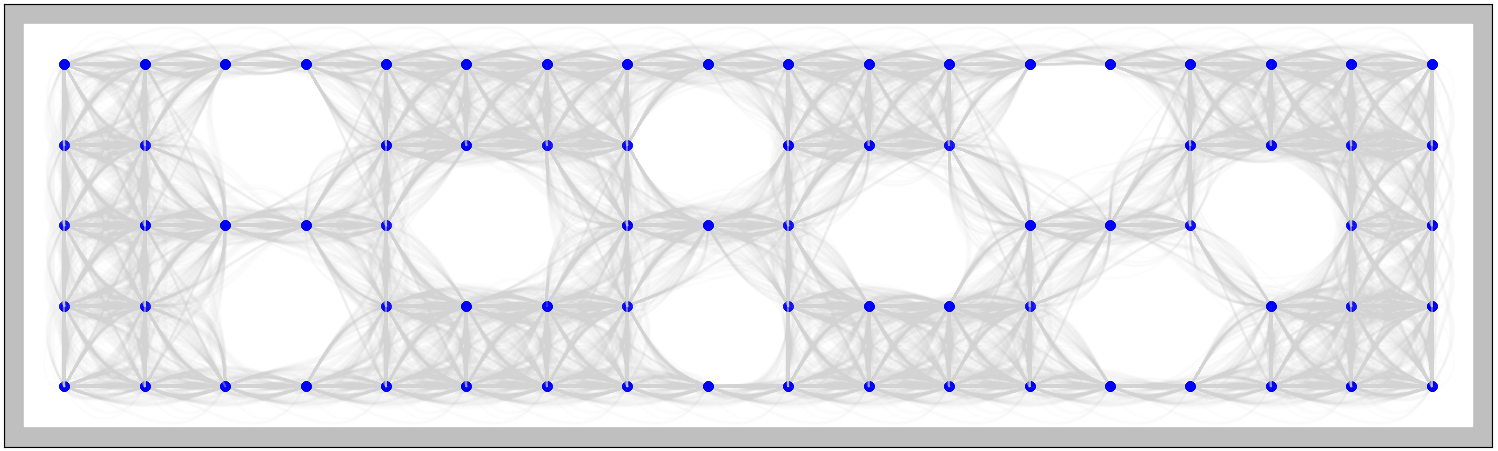}
  \end{subfigure}
  
  \vspace{-0.5em}
  \caption{Generic maps: with (bottom) and without (top) obstacles.}
  \label{fig:genericmaps}
\end{figure}
\begin{table}[h!]
\centering
\renewcommand{\arraystretch}{1.1}
\caption{Notation for continuous and graph-based multi-robot planning.}
\begin{tabular}{lll}
\hline
SR Symbol & MR Symbol & Meaning \\
\hline
& $[m]$ & Robot index set $\{1,...,m\}$ \\
$d_i, D_i$ &  & Dim.\ of $\mathcal{X}^i \subseteq \mathbb{R}^{d_i}$, $\mathcal{U}^i \subseteq \mathbb{R}^{D_i}$ \\
$\mathcal{X}^i$ & $\mathcal{X} = \times_{i=1}^m\mathcal{X}^i$ & Robot $i$ (joint) state space \\
$\mathcal{U}^i$ & $\mathcal{U}=\times_{i=1}^m\mathcal{U}^i$ & Robot $i$ (joint) control space \\
$\mathcal{X}^i_f \subset \mathcal{X}^i$ & $\mathcal{X}_f=\times_{i=1}^m\mathcal{X}_f^i \subset \mathcal{X}$ & Collision–free state space \\
$\mathcal{U}^i_T$ & $\mathcal{U}_T=\times_{i=1}^m \mathcal{U}^i_T$ & Admissible controls $u^i:[0,T]\mapsto\mathcal{U}^i$ \\
& & $\bigl(u:[0,T]\mapsto\mathcal{U}\bigr)$  \\
$x^i(t)$ & $x(t){=}(x^1(t),...,x^m(t))$ & Robot $i$ (joint) state at time $t\in[0,T]$ \\
$u^i(t)$ & $u(t){=}(u^1(t),...,u^m(t))$ & Robot $i$ (joint) control at time $t\in[0,T]$ \\
$x_0^i$ & $x_0$ &  Robot $i$ (joint) initial state \\
$\mathcal{X}^i_{\mathrm{goal}}$ & $\mathcal{X}_{\mathrm{goal}}=\times_{i=1}^m \mathcal{X}^i_{\mathrm{goal}}$ &   Robot $i$ (joint) goal region \\
$\pi^i_{x_0^i,u^i}$ & $\pi_{x_0,u}{=}(\pi^1_{x_0^1,u^1},...,\pi^m_{x_0^m,u^m})$ & Robot $i$ (joint) continuous trajectory \\
 &  & $\pi^i : [0,T]\mapsto \mathcal{X}^i$ $\bigl(\pi :[0,T]\mapsto \mathcal{X}\bigr)$ \\
$\Pi^i$ & $\Pi=\times_{i=1}^m\Pi^i$& All robot $i$ (joint) feasible continuous \\
& & trajectories \\
$c^i(\pi^i(t),\pi^{-i}(t))$ & & Stage cost of robot $i$ given its and \\
& & others’ trajectories \\
& $c(\pi(t))$ & Joint stage cost over joint trajectory \\
$J^i(\pi^i,\pi^{-i})$ & & Cost of robot $i$ given its and \\
& & others’ trajectories \\
& $J(\pi)$ & Global cost over joint trajectory \\
$f^i(x^i,u^i)$ & $f(x,u)$ & Robot $i$ (joint) dynamics: $\dot x^i(t)=$\\
& & $f^i(x^i(t),u^i(t))$ $\bigl(\dot x(t)=f(x(t),u(t))\bigr)$ \\
\hline
$\Delta t$ & & Fixed duration of each edge \\
$G^i=(V^i,E^i)$ & $G=(V,E)$ & Robot $i$ (joint, tensor-product) graph \\
$V^i$ & $V=\times_{i=1}^m V^i$ & Robot $i$ (joint) sampled states $x^i\in \mathcal{X}^i$ \\
& &incl.\ $x_0^i$, $\mathcal{X}^i_{\mathrm{goal}}$ $\bigl(x\in \mathcal{X}$ incl. $x_0$, $\mathcal{X}_{\mathrm{goal}}\bigr)$ \\
$E^i$ & $E=\times_{i=1}^m E^i$ & Robot $i$ (joint) edges $e^i = (x^i,y^i)$ \\
& & $\bigl(e = (x,y)\bigr)$ with control $u^i_{e^i} \in \mathcal{U}^i_{\Delta t}$ \\
& & $\bigl(u_{e} \in \mathcal{U}_{\Delta t}\bigr)$ \\
$\pi^i_{e^i}$ & $\pi_e{=}(\pi^1_{e^1},...,\pi^m_{e^m})$ & Robot $i$ (joint) trajectory on edge $e^i\ \bigl(e\bigr)$ \\
$\pi^i_n{=}(x^i_{e^i_1},...,x^i_{e^i_n})$ & $\pi_n{=}(x_{e_1},...,x_{e_n})$ & Robot $i$ (joint) $n$-step trajectory \\
& &  on $G^i$ $\bigl(G\bigr)$ \\
$\pi^i_{x^i}$ & $\pi_x{=}(\pi^1_{x^1},...,\pi^m_{x^m})$ & Robot $i$ (joint) trajectory to state $x^i\ \bigl(x\bigr)$ \\
$\pi^i_{x^i\to y^i}$ & $\pi_{x\to y}$ & Robot $i$ (joint) local trajectory from \\
& & state $x^i$ to $y^i\ \bigl(x$ to $y\bigr)$ \\
$\Pi^i_n$ & $\Pi_n=\times_{i=1}^m\Pi^i_n$ & Robot $i$ (joint) feasible $n$-step \\
& & trajectories \\
$P^i(x^i)$ & $P(x)$ & Predecessor map of node $x^i\ \bigl(x\bigr)$ \\
\hline
& $\varepsilon$ & Approximation factor for $\varepsilon$-equilibrium \\
\hline
\end{tabular}
\label{tab:nomenclature}
\end{table}

\end{document}
